\documentclass{article}
\usepackage{aditi}

\usepackage{microtype}
\usepackage{xspace}
\usepackage{hyperref}
\usepackage{url}
\usepackage{booktabs}

\usepackage{style}
\usepackage{algorithm}
\usepackage{algorithmic}
\usepackage[most]{tcolorbox}
\usepackage{colortbl}
\usepackage{tablefootnote}
\usepackage{threeparttable}
\usepackage{tabularx}
\usepackage{multirow}
\usepackage{makecell}
\usepackage{listings}
\usepackage{stfloats}

\definecolor{skyblue}{RGB}{204,229,255}
\tcbset{
  lightbluebox/.style={
    colback=skyblue!70,
    colframe=skyblue!75!black,
    boxrule=1pt,
    arc=4pt,
    auto outer arc
  }
}

\definecolor{TsinghuaPurple}{HTML}{660874}
\definecolor{mycolor_gray}{HTML}{ECECEC}
\definecolor{mydarkgreen}{RGB}{0,100,0}
\setContactAccent{TsinghuaPurple}
\hypersetup{colorlinks=true, citecolor=TsinghuaPurple, linkcolor=TsinghuaPurple, urlcolor=TsinghuaPurple}
\AtBeginDocument{\hypersetup{colorlinks=true, citecolor=TsinghuaPurple, linkcolor=TsinghuaPurple, urlcolor=TsinghuaPurple}}

\renewenvironment{abstract}{%
  \vskip.075in
  \begin{tcolorbox}[
    enhanced,
    colback=TsinghuaPurple!5!white,
    colframe=TsinghuaPurple!5!white,
    boxrule=0pt,
    arc=8pt,
    outer arc=8pt,
    left=6mm,
    right=6mm,
    top=4mm,
    bottom=4mm,
    boxsep=0pt
  ]
  \centerline{\large\bfseries Abstract}\vspace{0.3ex}%
}{%
  \par
  \end{tcolorbox}
  \vskip 1ex
}
\DeclareCaptionFormat{ForumCaption}{%
  \textbf{\color{TsinghuaPurple}#1#2}#3%
}
\newtheorem{finding}{Finding}

\newtcolorbox{mycustombox}[1]{
  enhanced,
  colback=black!5!white,
  colframe=black!75!white,
  boxrule=0.4pt,
  coltitle=white,
  title=#1,
  titlerule=0.4pt,
  fontupper=\small,
  fonttitle=\small,
  before upper={\par\smallskipamount},
  breakable,
  left=3mm,
  right=3mm,
  top=2mm,
  bottom=2mm,
  boxsep=1mm,
}

\setTitleruleGap{0.75pt}

\title{OmniVerifier-M1: Multimodal Meta-Verifier \\ with Explicit Structured Recalibration}

\setauthors{Xinchen Zhang$^{1*\dagger}$ \authorsep Bowei Liu$^{1*}$ \authorsep Jiale Liu$^{2}$ \authorsep Chufan Shi$^{3}$ \authorsep Yizhen Zhang$^{1}$ \authorsep Junhong Liu$^{4}$ \\ Youliang Zhang$^{1}$ \authorsep Zhiheng Li$^{1}$ \authorsep Yujiu Yang$^{1\ddagger}$ \authorsep Ling Yang$^{5\ddagger}$}
\setaffils{$^{1}$Tsinghua University \authorsep $^{2}$Pennsylvania State University \authorsep 
$^{3}$University of Southern California \\
$^{4}$Microcyto  \authorsep 
$^{5}$Princeton University\\
$^{*}$: Equal Contribution. \ \  $^{\dagger}$: Project Leader. \ \  $^{\ddagger}$: Corresponding Authors.
}

\setemaillinks{\ForumEmailLink{zhangxc24@mails.tsinghua.edu.cn}, \ \ForumEmailLink{yangling0818@163.com}}
\setcode{https://github.com/Cominclip/OmniVerifier}
\setwebsite{https://omniverifier.github.io/}

\begin{document}

\maketitle

\begin{abstract}
Visual outcomes are increasingly central to multimodal large language models, making reliable and fine-grained verification essential for scaling generalist foundation models. In this work, we investigate \textbf{\textit{multimodal meta-verification}}, which leverages verifier-generated rationales rather than decision-only signals, and explore how to effectively incorporate meta-verification feedback into multimodal verifier training. We identify two key findings. First, symbolic verifier outputs (e.g., bounding boxes) outperform textual explanations as meta-verification rationales, enabling efficient rule-based reinforcement learning rewards while avoiding reliance on model-based rewards from auxiliary judge models. Second, decoupling reinforcement learning objectives for binary judgment and meta-verification substantially outperforms joint reward optimization, due to intrinsic differences in output structure and learning dynamics. Based on these insights, we train \textbf{OmniVerifier-M1}, a generalist visual verifier leveraging symbolic meta-verification and decoupled reinforcement learning. OmniVerifier-M1 provides robust verification and fine-grained error localization, and further enables \textbf{M1-TTS}, a verifier-driven agentic generation system achieving dynamic region-level self-correction. This approach paves the way for more reliable, interpretable, and fine-grained multimodal verification, supporting safer and more controllable foundation model deployment.
\end{abstract}

\section{Introduction}
Current multimodal large language models (MLLMs) demonstrate powerful reasoning and generative capabilities in a variety of inference scenarios and reasoning modes \citep{guo2025seed1, seed2026seed1, comanici2025gemini, zhang2025generative}. Visual outcomes serve as a crucial bridge connecting multimodal understanding and generation, whether they are produced via agentic tool-use \citep{o3, zheng2025deepeyes} or through native generative processes \citep{liao2025mogao, gu2025thinkmorph, deng2025emerging}. In interleaved multimodal reasoning and interactive systems, enabling precise, fine-grained, and reliably evaluable verification of visual outcomes is a key requirement for scaling unified multimodal models and advancing generative intelligence.

Universal verification of visual outcomes remains at an early stage. Most existing image reward models \citep{xu2024visionreward, zhang2024itercomp}, such as RewardDance \citep{wu2025rewarddance} and UnifiedReward \citep{wang2025unified}, focus primarily on training and evaluation in traditional text-to-image generation scenarios. OmniVerifier \citep{zhang2025generative} marks an important step toward more general, world-modeling-oriented visual verification by leveraging reinforcement learning with binary (True/False) judgments of visual outcomes. However, feedback limited to binary decisions without supervision from detailed generative critiques can be coarse and uninformative, reducing the granularity needed for precise and effective judge model improvment \citep{shao2025deepseekmath, wang2026reward}.

In this work, we move beyond binary verifier judgments and examine the reliability of verifier-generated rationales and explanations, a process referred to as meta-verification \citep{shao2025deepseekmath, wang2026reward}. Instead of relying on decision-level signals, meta-verification operates at the level of explanations to guide the learning objective, yielding more informative and more restrictive feedback. In the investigation of how to improve \textbf{\textit{multimodal verifier}} training by integrating meta-verification feedback, this work identifies two core findings:

\textbf{Finding 1: Symbolic verifier outputs beat textual ones in meta-verification, enabling scalable and reliable rule-based RL rewards.}

Motivated by the highly structured and spatial nature of visual representations, we use symbolic outputs (e.g., bounding boxes or points) as rationales for meta-verification feedback when training the verifier, instead of relying on textual explanations. Textual rationales require additional judge models for evaluation, which slows down meta-verification feedback and increases the risk of reward hacking. In contrast, symbolic rationales provide a structured approximation of explanatory intent that can be directly assessed with explicit rules. Experiments show that in meta-verification training, symbolic rationales consistently match or outperform textual explanations, allowing rule-based feedback to replace model-based rewards, improving training efficiency while inherently preventing reward hacking.

\textbf{Finding 2: Decoupling RL rewards for binary judgment and meta-verification outperforms joint training in leveraging meta-verification feedback.}

In exploring how to better leverage meta-verification feedback for training the verifier, we find that combining binary judgment accuracy and meta-verification reward into a single joint reward for each sample offers little improvement in judgement accuracy. This is due to intrinsic differences in task structure and difficulty: binary judgments operate in a highly discrete output space, allowing the model to occasionally score well by chance, whereas meta-verification provides continuous, stronger supervision that effectively constrains such random behavior. To address this, we design a decoupling strategy that treats binary judgment and meta-verification as separate tasks with distinct reward systems for mixed data. Both empirical results and theoretical analysis confirm the superiority of decoupled training over joint training in the using of meta-verification.

Based on these observations, we train \textbf{OmniVerifier-M1}, a multimodal verifier adaptable to diverse multimodal foundation models \citep{cui2025emu3, cao2025hunyuanimage}. We adopt a decoupled training paradigm that leverages meta-verification feedback derived from symbolic outputs, enabling more effective and stable verifier optimization. Beyond serving as a multimodal visual verifier, OmniVerifier-M1 functions as a fine-grained multimodal optimizer that can precisely localize erroneous regions and provide actionable correction guidance. Building on this capability, we further develop a fine-grained multimodal agentic generation system, \textbf{M1-TTS}, in which verifier-driven decisions are expressed as heterogeneous, tool-level actions, including symbolic region localization and structured textual edit operations, and are iteratively coordinated through replanning to guide a unified foundation model toward region-level self-correction. Experimental results show that M1-TTS substantially outperforms conventional global-level multi-turn editing methods in correction effectiveness.

Our contributions can be summarized as follows:
\begin{itemize}
    \item \textbf{Multimodal Meta-Verification Paradigm:} We bring meta-verification to multimodal setting, enabling fine-grained verifier feedback beyond binary judgment.
    \item \textbf{Symbolic Meta-Verification Rationales:} We show that symbolic verifier outputs outperform textual explanations as meta-verification rationales, supporting efficient rule-based RL without reward hacking.
    \item \textbf{Decoupled Meta-Verification Training:} We theoretically and empirically demonstrate that decoupling reinforcement learning objectives for binary judgment and meta-verification substantially outperforms joint reward optimization.
    \item \textbf{Generalist Verifier and Agentic Correction System:} We develop \textbf{OmniVerifier-M1} and \textbf{M1-TTS}, a generalist multimodal verifier and an agentic correction system that support robust visual verification and effective region-level self-correction across diverse generative foundation models.
\end{itemize}

\section{Related Work}

\paragraph{Generative Veirifer or Reward Models.} Unlike traditional reward models that only output a scalar reward \citep{ouyang2022training, zhang2024generative, xu2023imagereward,luo2025unlocking,luo2026narrow}, generative verifiers provide interpretable, generative critiques, offering immense potential for scaling test-time computation or reinforcement learning \citep{zhang2025generative, liu2025inference, wang2026reward, yang2026hermesflow}. LLM- or VLM-as-a-Judge \citep{zhu2023judgelm, chen2025judgelrm, chen2024mllm} methods leverage the reasoning capabilities of large models to make evaluations more transparent and accurate, pioneering the use of foundation models as evaluators. DeepSeekMath-V2 \citep{shao2025deepseekmath} introduces meta-verification to assesses whether issues identified by the verifier indeed exist, which enhance verifier training by providing strict supervision. OmniVerifier \citep{zhang2025generative} identifies three fundamental atomic capabilities for verifying visual outcomes, marking a first step toward a general-purpose mutlimodal verifier for universal scenarios. However, exploration of multimodal verifiers is still in an early stage. Starting from the essence of visual representations, we develop a robust multimodal verifier training paradigm based on symbolic outputs with decoupled reinforcement learning.

\paragraph{Iterative Refinement for Visual Generation.} As we move towards more general visual generation scenarios, especially complex compositional generation \citep{zhang2024realcompo, yang2024mastering, yang2025chartmimic} or world-knowledge reasoning tasks \citep{wang2025genexam, hu2025multimodal, yang2026ureason}, it is difficult to achieve perfect results in a single attempt. Many approaches address this by combining a visual verifier with a generative model, employing a generate-reflect-refine loop to progressively improve generated images \citep{qin2025uni, jiang2025draco, huang2025interleaving, jaiswal2026iterative}. ReflectionFlow \citep{zhuo2025reflection} constructs large-scale dataset to perform reflection tuning on diffusion transformer to achieve multiround refinement. OmniVerifier-TTS \citep{zhang2025generative} bridge the image generation and edit within unifed multimodal models through the guidence of visual verifier. These methods optimize images from a high-level, macro perspective. However, erroneous regions are often small and can be easily confused with visually similar attributes, making precise, multi-dimensional control via textual descriptions challenging. To address this, we build an agentic generation system based on symbolic verifier outputs, allowing targeted, region-level corrections through efficient multi-round refinement.

\section{Problem Formulation}
We study reinforcement learning-based training of a pointwise multimodal verifier under the RLVR (Reinforcement Learning with Verifier Rewards) paradigm \citep{shao2024deepseekmath, guo2025deepseek}. Our goal is to train a verifier that not only determines whether a visual outcome satisfies the given prompt, but also produces transparent, fine-grained, and actionable critiques, providing reliable supervision for model reflection and refinement.

\begin{figure*}[t!]
    \centering
    \vspace{-2mm}
    \includegraphics[width=\linewidth]{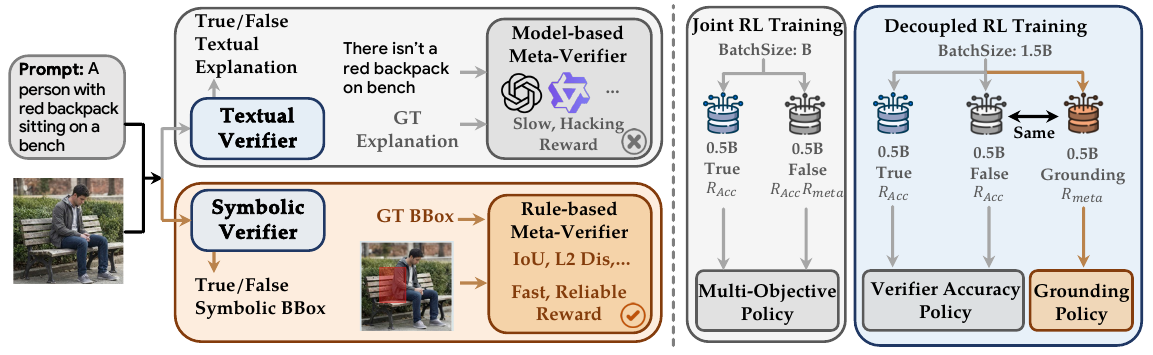}
    \caption{Pipeline of two key findings. Left: the advantage of symbolic bounding boxes over textual explanations, enabling rule-based rewards to inherently prevent reward hacking and accelerate training. Right: the comparison between joint training and decoupled training.}
    \vspace{-1mm}
    \label{fig-method}
\end{figure*}

\subsection{Baseline RLVR Training for Multimodal Verifiers}
Let $\mathcal{D} = \{(x_i, y_i)\}$ denote the training set, where $x_i = (I_i, P_i)$ consists of an image $I_i$ and its corresponding prompt $P_i$, and $y_i \in \{\texttt{True}, \texttt{False}\}$ is the ground-truth judgment of whether the image satisfies the prompt.

A visual verifier $\pi_\theta$ takes $(I, P)$ as input and generates a textual output $o$. A binary decision $\hat{y} \in \{\texttt{True}, \texttt{False}\}$ is then deterministically parsed from $o$ according to a predefined output format:
\begin{equation}
(o, \hat{y})  = \pi_\theta(I, P).
\end{equation}
The RL objective for training the verifier is:
\begin{equation}
\begin{aligned}
\max_{\pi_\theta} \;
\mathbb{E}_{\substack{
(I_i, P_i, y_i) \sim \mathcal{D},\\
(o_i, \hat{y}_i) \sim \pi_\theta(\cdot \mid I_i, P_i)
}}
\big[
\mathcal{R}_{\text{f}}(o_i)
+ \mathcal{R}_{\text{acc}}(\hat{y}_i, y_i)
\big].
\end{aligned}
\end{equation}
\paragraph{Format Reward.}
The format reward $\mathcal{R}_{\text{f}}(\cdot)$ requires the verifier ouput $o_i$ to perform an explicit reasoning step before giving the final judgment, where the verifier is instructed to include its intermediate analysis within \texttt{<think>} and \texttt{</think>} tags. The reward is realized as an indicator function checking strict adherence to this structure.
\paragraph{Accuracy Reward.}
The accuracy reward $\mathcal{R}_{\text{acc}}(\cdot)$ is a binary reward defined as
\begin{equation}
\mathcal{R}_{\text{acc}}(\hat{y}, y) =
\begin{cases}
1, & \text{if } \hat{y} = y, \\
0, & \text{otherwise}.
\end{cases}
\end{equation}
This reward provides supervision only at the decision level, without considering the correctness of the verifier’s reasoning or generative critique. While it can guide the model to learn coarse judgments, the learning signal is limited and easily exploitable: the model can achieve high reward by guessing or following superficial patterns, rather than performing meaningful verification. Consequently, this formulation fails to encourage fine-grained, interpretable, and reliable verification behavior.
\subsection{Meta-Verification Enhanced RLVR Training}
To overcome the limitations of decision-only supervision, meta-verification is used to enhance RLVR training of the verifier \citep{shao2025deepseekmath}. In this setting, the verifier is required to produce not only a binary decision, but also an explicit rationale when the decision is negative. Formally, the verifier outputs:
\begin{equation}
(o, \hat{y}, e) = \pi_\theta(I, P),
\end{equation}
where $\hat{y} \in \{\texttt{True}, \texttt{False}\}$ and $e$ denotes an explanation, which is only required when $\hat{y} = \texttt{False}$.

By integrating meta-verification feedback into the reward function, the enhanced verifier RL objective is formulated as:
\begin{equation}\label{eq5}
\max_{\pi_\theta} \mathbb{E}_{\substack{
(I_i, P_i, y_i) \sim \mathcal{D},\\
(o_i, \hat{y}_i, e_i) \sim \pi_\theta(\cdot \mid I_i, P_i)
}}
[
\mathcal{R}_{\text{f}}(o_i)
+
\mathcal{R}_{\text{acc}}(\hat{y}_i, y_i)
\cdot
\big(
\mathbb{I}[y_i=\texttt{True}]
+
\mathbb{I}[y_i=\texttt{False}]
\cdot
\mathcal{R}_{\text{meta}}(e_i)
\big)
].
\end{equation}
\paragraph{Meta-Verification Reward.}
The meta-verification reward $\mathcal{R}_{\text{meta}}(\cdot)$ evaluates the correctness and validity of the verifier-generated rationale $\hat{e}$. Specifically, a separate \textit{meta-verifier} $\mathcal{M}_\phi$ is used to assess whether the explanation correctly identifies genuine issues in the visual outcome:
\begin{equation}
\mathcal{R}_{\text{meta}} = \mathcal{M}_\phi(I, P, \hat{e}) \in \mathbb{R}.
\end{equation}
This reward provides supervision at the explanation level, encouraging the verifier to produce faithful and informative rationales rather than spurious or hallucinated justifications. By incorporating meta-verification feedback, the verifier receives denser and more restrictive learning signals that go beyond binary correctness, enabling improved reliability, interpretability, and training efficiency.

In subsequent sections, we further analyze how different forms of rationales and reward coupling strategies affect optimization dynamics, and show that symbolic rationales combined with decoupled reinforcement learning objectives yield substantially better performance.

\section{Symbolic Rationales for Rule-Based Multimodal Meta-Verification}
\label{ablation1}

\begin{figure*}[t]
    \centering
    \includegraphics[width=\linewidth]{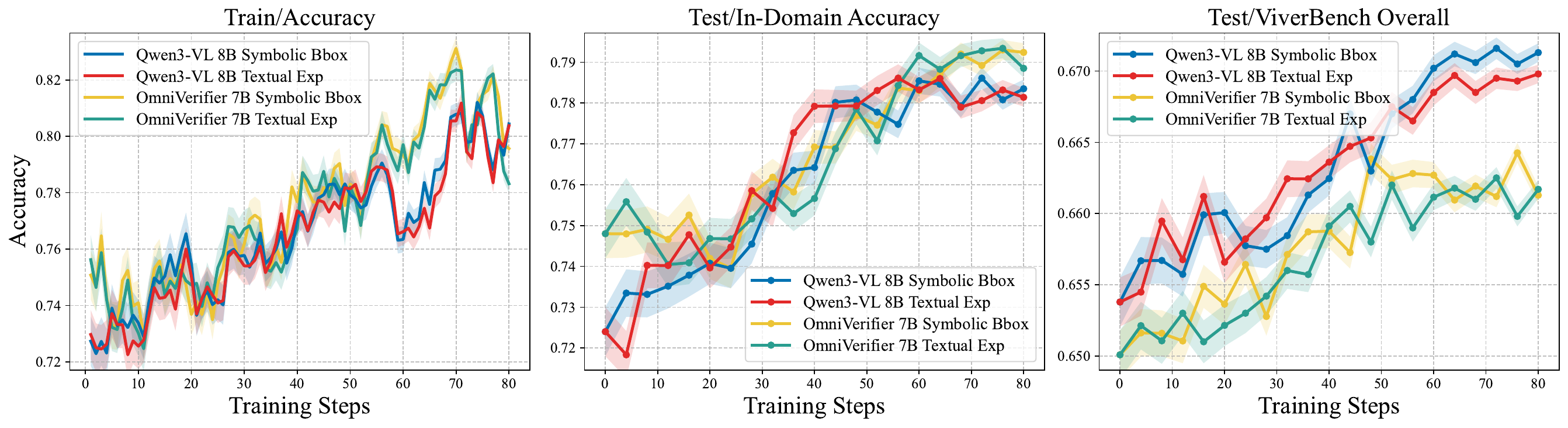}
    \caption{Comparison between symbolic bounding boxes and textual explanations as meta-verification signals in verifier RLVR training.}
    \label{fig-ablation1}
\end{figure*}


\paragraph{Drawbacks of Model-Based Meta-Verifiers.} Model-based reward models in RLVR aim to leverage the core capabilities of LLMs, particularly their advanced reasoning skills to produce more accurate judgments and rewards \citep{chen2025judgelrm, whitehouse2025j1}. Their flexibility mitigates the rigidity of rule-based rewards, which often struggle to generalize across diverse patterns. However, in dynamic reinforcement learning settings, these approaches are highly vulnerable to reward hacking: models may exploit weaknesses in the verifier to obtain high rewards without genuine improvements in reasoning, and in some cases even at the cost of degraded reasoning performance \citep{huang2025pitfallsrulemodelbasedverifiers, wang2026reward}. Moreover, applying model-based reward to large batches of samples generated during RL rollouts increases both the training cost and the overall training time \citep{wang2026reward}.

\paragraph{Revisiting Rule-Based Meta-Verifiers.} Beyond domains such as code and mathematics with structured answer, the diversity of output formats and the complexity of semantic composition make it difficult to directly apply rule-based signals as reinforcement learning rewards. In contrast, images constitute highly structured, spatially grounded, and high-dimensional representations. In visual outcome verification, errors in images are not only expressible through textual explanations, they can be captured through \textit{symbolic, structured outputs} such as bounding boxes, keypoints, or line segments that explicitly localize and characterize failure regions. For example, instead of generating verbose textual explanations, a verifier can output symbolic cues that spatially localize mismatched regions, providing concise and actionable feedback for correction, as shown in Fig. \ref{fig-method}. Such grounded symbolic feedback forms a natural basis for rule-based meta-verification, enabling precise error attribution without dependence on unconstrained textual reasoning.

\paragraph{Experimental Setup.} We apply DAPO \citep{yu2025dapo} to perform RL training on OmniVerifier-7B \citep{zhang2025generative, bai2025qwen2} and Qwen3-VL-8B \citep{Qwen3-VL}. For each training sample, we provide ground-truth binary judgments (True/False) together with ground-truth textual explanations and bounding boxes for meta-verification. For textual explanation, we use Qwen3-4B \citep{yang2025qwen3} to perform model-based comparation between the groundtruth explanation and verifier generated explanation to answer whether the two is semantically equal. For symbolic bounding box, we use intersection over union (IoU) as rule-based reward to provide meta-verification feedback. All the two models are trained for 80 steps on 16 NVIDIA A800-80G GPUs. We evaluate both models on ViVerBench \citep{zhang2025generative}, a comprehensive and challenging benchmark  designed for visual-outcome verification. 

\paragraph{Experimental Analysis.} 

\begin{table}[t]
\centering
\caption{
    Performance on ViVerBench and efficiency analysis.
}
\begin{threeparttable}
\resizebox{\linewidth}{!}{
\begin{tabular}{l|c|c|c|c|c}
\toprule
\textbf{Model} 
& \multicolumn{1}{c|}{\makecell{\textbf{ViVerBench} \\ \textbf{(Overall)}}} 
& \multicolumn{1}{c|}{\makecell{\textbf{Per-Card GPU} \\ \textbf{Memory (GB)}}} 
& \multicolumn{1}{c|}{\makecell{\textbf{Per-Sample Reward} \\ \textbf{Computation Time (ms)}}} 
& \multicolumn{1}{c|}{\makecell{\textbf{Training Time} \\ \textbf{per Step (min)}}}
& \multicolumn{1}{c}{\makecell{\textbf{Mean Response} \\ \textbf{Length (tokens)}}} \\
\midrule

\textbf{OmniVerifier 7B} 
& 0.650
& - 
& - 
& -
& - \\

\textbf{OmniVerifier 7B(Bbox)} 
& \cellcolor{mycolor_gray}{0.661}
&  \cellcolor{mycolor_gray}{48.6}
&  \cellcolor{mycolor_gray}{0.021}
&  \cellcolor{mycolor_gray}{8.13}
&  \cellcolor{mycolor_gray}{384}  \\

\textbf{OmniVerifier 7B(Exp)} 
& \cellcolor{mycolor_gray}{0.662} 
&  \cellcolor{mycolor_gray}{56.9}
&  \cellcolor{mycolor_gray}{20.2}
&  \cellcolor{mycolor_gray}{10.27}
&  \cellcolor{mycolor_gray}{340} \\

\textbf{Qwen 3-VL 8B} 
&  0.654
&  -
&  -
&  -
&  -\\

\textbf{Qwen 3-VL 8B(Bbox)} 
& \cellcolor{mycolor_gray}{0.671} 
& \cellcolor{mycolor_gray}{49.9} 
& \cellcolor{mycolor_gray}{0.021}
& \cellcolor{mycolor_gray}{8.74}
& \cellcolor{mycolor_gray}{516} \\

\textbf{Qwen 3-VL 8B(Exp)} 
& \cellcolor{mycolor_gray}{0.670} 
& \cellcolor{mycolor_gray}{58.3} 
& \cellcolor{mycolor_gray}{20.2}
& \cellcolor{mycolor_gray}{11.08}
& \cellcolor{mycolor_gray}{488} \\

\bottomrule
\end{tabular}
}
\end{threeparttable}
\label{tab:rule_based_vlm_viverbench_efficiency}
\end{table}

From Fig. \ref{fig-ablation1}, we observe that during training, the accuracy on the training set exhibits highly similar trends for both models, whether using symbolic bounding boxes or textual explanations as meta-verification signals. Moreover, their performance on both in-domain test sets and ViVerBench is also remarkably similar. This indicates that employing a rule-based IoU reward as meta-verification can serve as a reliable proxy for textual explanations. It effectively guides the verifier to improve its capabilities, while the symbolic format allows direct adherence to rule-based reward modeling, elegantly mitigating the issue of reward hacking at its source.


Additionally, as shown in Table~\ref{tab:rule_based_vlm_viverbench_efficiency}, we compare the computational overhead of rule-based and model-based meta-verification from both training and inference perspectives.  
During training, symbolic outputs show clear efficiency advantages over textual explanations by reducing GPU memory usage, per-sample reward computation time, and per-step training time, while maintaining comparable inference efficiency with similar response lengths.
Therefore, in multimodal verification scenarios, symbolic bounding box outputs can effectively replace textual explanations, providing comparable supervisory strength and inference-side overhead while substantially mitigating reward hacking and reducing training costs.

\begin{table}[!ht]
\begin{minipage}{\columnwidth}    
    \centering
    \begin{tcolorbox}[left=1em, right=1em]
        \small
    \vspace{-0.6em}
        \begin{finding}  
            Symbolic verifier outputs beat textual ones, unlocking rule-based RL rewards in meta-verification.
        \end{finding}
    \end{tcolorbox}
\end{minipage}
\vspace{-1em}
\end{table}

\section{Decoupled Reinforcement Learning Incentivizing Meta-Verification}

We investigate a general reinforcement learning paradigm for multimodal verifier training with meta-verification training. The formulation in Eq. \ref{eq5} as \textbf{\textit{joint training}}: for each training sample, we first assess the correctness of the binary judgment. When both the model prediction and the ground-truth label are \texttt{False}, we further employ a rule-based verifier (e.g., IoU) to generate meta-verification feedback.

A careful analysis of joint training reveals two intrinsic limitations. First, the meta-verification reward $\mathcal{R}_{\text{meta}}(\cdot)$ is activated only when both the prediction of the model and the ground-truth label are \texttt{False}, leading to a conditional and discontinuous gradient flow for the meta-verification objective. Second, binary judgment and meta-verification differ fundamentally in output structure and optimization landscape: the former operates over a discrete, low-entropy label space, while the latter requires learning continuous, fine-grained outputs. Jointly optimizing these heterogeneous objectives induces conflicting learning dynamics, which motivates an in-depth examination of the joint training paradigm:

\begin{figure*}[t]
    \centering
    \includegraphics[width=\linewidth]{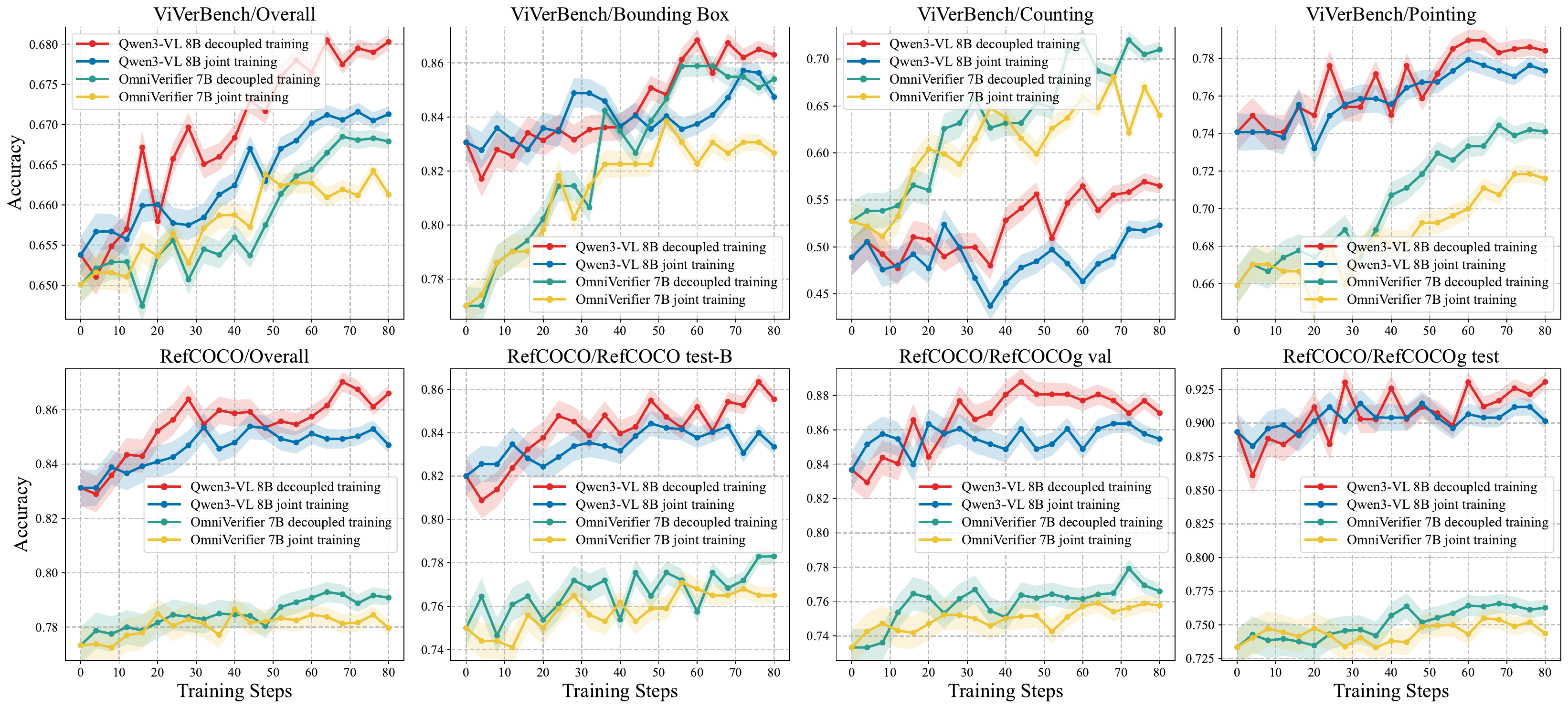}
    \caption{Performance comparison between decoupled and joint RL training with symbolic bounding boxes as meta-verification signals.}
    \label{fig-ablation2}
\end{figure*}


\begin{lemma} \label{lemma1}
    In joint RLVR training of multimodal verifier, all gradient terms related to the explanation $e$ are multiplicatively gated by the accuracy reward $\mathcal{R}_{\text{acc}}(\cdot)$.
\end{lemma}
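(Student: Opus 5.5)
The approach is to write down the policy-gradient estimator for the joint objective in Eq.~\ref{eq5} and track which terms depend on the explanation $e$. Start from the per-sample joint reward
\[
R_i \;=\; \mathcal{R}_{\text{f}}(o_i) + \mathcal{R}_{\text{acc}}(\hat{y}_i,y_i)\bigl(\mathbb{I}[y_i=\texttt{True}] + \mathbb{I}[y_i=\texttt{False}]\,\mathcal{R}_{\text{meta}}(e_i)\bigr).
\]
By the score-function identity, the gradient of the objective is $\mathbb{E}[R_i\,\nabla_\theta \log \pi_\theta(o_i,\hat{y}_i,e_i\mid I_i,P_i)]$, possibly with a group-normalized advantage in place of $R_i$ for DAPO/GRPO. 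The first move is to factor the log-likelihood autoregressively as
\[
\log\pi_\theta(o,\hat y,e\mid I,P) = \log\pi_\theta(o,\hat y\mid I,P) + \log\pi_\theta(e\mid o,\hat y, I,P).
\]
This splits the gradient into a ``judgment'' part and an ``explanation'' part $\nabla_\theta\log\pi_\theta(e\mid\cdot)$.

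Next, decompose $R_i$ itself into the terms that are independent of $e$, namely $\mathcal{R}_{\text{f}}$ and $\mathcal{R}_{\text{acc}}\,\mathbb{I}[y=\texttt{True}]$, and the single term that depends on $e$, namely $\mathcal{R}_{\text{acc}}(\hat y,y)\,\mathbb{I}[y=\texttt{False}]\,\mathcal{R}_{\text{meta}}(e)$. The gradient signal attributable to the explanation's quality is the covariance of this last term with the explanation score function. The terms that do not depend on $e$ contribute nothing in expectation when paired with $\nabla\log\pi(e\mid\cdot)$, because conditional on $(o,\hat y)$ they are constants and $\mathbb{E}_e[\nabla\log\pi(e\mid\cdot)]=0$. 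Hence every surviving $e$-related gradient term has the form
\[
\mathbb{E}\bigl[\mathcal{R}_{\text{acc}}(\hat y,y)\,\mathbb{I}[y=\texttt{False}]\,\mathcal{R}_{\text{meta}}(e)\,\nabla_\theta\log\pi_\theta(e\mid\cdot)\bigr],
\]
together with the cross term $\mathcal{R}_{\text{acc}}\,\mathbb{I}[y=\texttt{False}]\,\mathcal{R}_{\text{meta}}(e)\,\nabla\log\pi(o,\hat y\mid\cdot)$. Both carry the explicit multiplicative factor $\mathcal{R}_{\text{acc}}$. Since $\mathcal{R}_{\text{acc}}\in\{0,1\}$, this gate vanishes whenever $\hat y\neq y$, and it is only active on the event $\hat y=y=\texttt{False}$. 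That is exactly the claimed gating.

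The main obstacle is handling the advantage normalization used in DAPO, where $R_i$ is replaced by $(R_i-\mu_G)/\sigma_G$. Here the group statistics couple samples, so a sample with $\mathcal{R}_{\text{acc}}=0$ still receives a nonzero advantage. I would address this by arguing about the dependence on $e_i$ rather than about the value of the advantage. The only channel through which $e_i$ affects any advantage, whether its own or, via $\mu_G$ and $\sigma_G$, another sample's, is the product $\mathcal{R}_{\text{acc}}(\hat y_i,y_i)\mathcal{R}_{\text{meta}}(e_i)$. Therefore $\partial A/\partial \mathcal{R}_{\text{meta}}(e_i)$ contains the factor $\mathcal{R}_{\text{acc}}(\hat y_i,y_i)$ for every sample in the group. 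A baseline term does not change the expected gradient, so the gating statement carries over to the normalized estimator.
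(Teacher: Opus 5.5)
Your proof is correct and takes the same route as the paper's: write the score-function gradient, split $\log\pi_\theta$ into a judgment part and an explanation part, and read off the factor $\mathcal{R}_{\text{acc}}=\mathbf{1}[\hat{y}=y]$ on the explanation term. You are actually more faithful to Eq.~\ref{eq5} than the paper, which simply takes $J_{\text{joint}}=\mathbb{E}[\mathcal{R}_{\text{acc}}\cdot\mathcal{R}_{\text{meta}}(e)]$ and drops $\mathcal{R}_{\text{f}}$ and the $\mathbb{I}[y=\texttt{True}]$ term, whereas you dispose of those terms via $\mathbb{E}_e[\nabla_\theta\log\pi_\theta(e\mid o,\hat{y},x)]=0$.

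The only loose step is your DAPO remark. Dividing by $\sigma_G$, which itself depends on $R_i$, is not a baseline, so "a baseline does not change the expected gradient" does not cover it. Argue instead that on $\hat{y}_i\neq y_i$ the advantage $A_i$ is a function of $(o_i,\hat{y}_i)$ and the other rollouts only. Then $\mathbb{E}[A_i\nabla_\theta\log\pi_\theta(e_i\mid o_i,\hat{y}_i,x_i)]=0$ there, by the same score-function identity you already use.
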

All proofs are provided in Appendix \ref{appendix-proof}. This lemma reveals that under joint training, the verifier must first learn to make correct binary judgments before it can receive reward signals about where the error occurs. Based on this lemma, we have:

\begin{theorem} \label{theorem1}
Let the verifier’s decision (classification) accuracy on the data distribution be denoted as:
\begin{equation}
p_{\text{acc}}(\theta) = \mathbb{E}_{x \sim \mathcal{D}} \left[ \mathbb{P}_{\pi_\theta} \left( \hat{y} = y \mid x \right) \right],
\end{equation}
Then, in joint training, the gradients related to meta-verification satisfy:
\begin{equation}
\!\!\!\left\| \nabla_\theta J_{\text{joint}}^{(e)} \right\|\!\leq p_{\text{acc}}(\theta) \cdot \mathbb{E}\!\left[\left\| \mathcal{R}_{\text{meta}}(e) \nabla_\theta \log \pi_\theta(e \!\mid \!x, \hat{y}) \right\|\right].
\end{equation}
\end{theorem}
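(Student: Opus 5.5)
We start from Lemma~\ref{lemma1} and write the joint objective of Eq.~\eqref{eq5} through the autoregressive factorization $\pi_\theta(o,\hat y,e\mid x)=\pi_\theta(\hat y\mid x)\,\pi_\theta(e\mid x,\hat y)$, where the reasoning trace is absorbed into the conditioning for notational simplicity. The score function then splits additively, $\nabla_\theta\log\pi_\theta = \nabla_\theta\log\pi_\theta(\hat y\mid x)+\nabla_\theta\log\pi_\theta(e\mid x,\hat y)$. We define $\nabla_\theta J^{(e)}_{\text{joint}}$ as the part of the policy gradient that multiplies the explanation score $\nabla_\theta\log\pi_\theta(e\mid x,\hat y)$ and carries the meta-verification reward. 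By Lemma~\ref{lemma1}, this part reads
\begin{equation*}
\nabla_\theta J^{(e)}_{\text{joint}}=\mathbb{E}_{x\sim\mathcal{D},\,(\hat y,e)\sim\pi_\theta}\Big[\mathcal{R}_{\text{acc}}(\hat y,y)\,\mathbb{I}[y=\texttt{False}]\,\mathcal{R}_{\text{meta}}(e)\,\nabla_\theta\log\pi_\theta(e\mid x,\hat y)\Big].
\end{equation*}
Terms involving $\mathcal{R}_{\text{f}}$ or the $\mathbb{I}[y=\texttt{True}]$ branch do not contain $\mathcal{R}_{\text{meta}}$, so we exclude them by definition.

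Next we apply Jensen's inequality (the triangle inequality for the norm of an expectation) to move the norm inside. We then use that $\mathcal{R}_{\text{acc}}$ and $\mathbb{I}[y=\texttt{False}]$ take values in $\{0,1\}$, which gives
\begin{equation*}
\big\|\nabla_\theta J^{(e)}_{\text{joint}}\big\|\le \mathbb{E}\Big[\mathbb{I}[\hat y=y]\;\big\|\mathcal{R}_{\text{meta}}(e)\nabla_\theta\log\pi_\theta(e\mid x,\hat y)\big\|\Big].
\end{equation*}
We then condition on $x$ and $\hat y$ and take the inner expectation over $e\sim\pi_\theta(\cdot\mid x,\hat y)$. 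Define $G(x,\hat y):=\mathbb{E}_e\|\mathcal{R}_{\text{meta}}(e)\nabla_\theta\log\pi_\theta(e\mid x,\hat y)\|$. The bound becomes $\mathbb{E}_x\big[\mathbb{P}_{\pi_\theta}(\hat y=y\mid x)\,G(x,y)\big]$.

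The main obstacle is factoring out $p_{\text{acc}}(\theta)$. In general, $\mathbb{E}[\mathbb{I}[\hat y=y]G]$ equals $p_{\text{acc}}\cdot\mathbb{E}[G\mid \hat y=y]$, not $p_{\text{acc}}\cdot\mathbb{E}[G]$. To close the gap, we will interpret the expectation on the right-hand side of the theorem as being taken over correctly decided samples, i.e.\ conditionally on $\hat y=y$. This is the natural reading, since $e$ is only produced and scored when $\hat y=\texttt{False}$. With this reading we have the exact identity $\mathbb{E}[\mathbb{I}[\hat y=y]G]=p_{\text{acc}}(\theta)\,\mathbb{E}[G\mid\hat y=y]$, and the claim follows.

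If we instead want the unconditional expectation, we will add a mild regularity assumption: either $G$ is (approximately) independent of the correctness event, or $G(x,\hat y)\le \bar G$ uniformly. The second gives the weaker form $\|\nabla_\theta J^{(e)}_{\text{joint}}\|\le p_{\text{acc}}(\theta)\,\bar G$. We will state this assumption explicitly, because it is where the argument's rigor hinges. We will then remark that the $\mathbb{I}[y=\texttt{False}]$ factor only tightens the bound further, and that as $p_{\text{acc}}\to 0$ the meta-verification gradient vanishes. This vanishing is the qualitative consequence the theorem is meant to capture.
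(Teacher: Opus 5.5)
Your argument is essentially the paper's: factorize the log-probability, use Jensen to move the norm inside the expectation, let $\mathbf{1}[\hat y=y]$ kill the incorrect branch, and condition on $x$ to reach $\mathbb{E}_x[\mathbb{P}_{\pi_\theta}(\hat y=y\mid x)\,G(x,y)]$. The paper finishes with your fallback (b), bounding this by $p_{\text{acc}}(\theta)\cdot C$ with $C=\sup_x\mathbb{E}_{e}\|\mathcal{R}_{\text{meta}}(e)\nabla_\theta\log\pi_\theta(e\mid x,y)\|$ and silently identifying $C$ with the expectation in the statement; your caution is therefore justified, since the unconditional-expectation form is not what gets proved and can fail when $G$ is positively correlated with the event $\hat y=y$, while your conditional-on-$\hat y=y$ reading gives an exact and sharper version.
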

From this theorem, we observe that in the early stage of RL training, if $p_{\text{acc}}(\theta) \ll 1$, we have $\nabla_\theta J_{\text{joint}}^{(e)}\approx0$. This implies that meta-verification can hardly be optimized effectively. In particular, for smaller or less capable models, there exists an inherent gap between binary judgment and meta-verification.

Based on the above analysis of these limitations, we decompose binary judgment and meta-verification into two separate tasks, each served by an independent reward model, rather than coupling the two rewards in a sequential manner. We refer to this strategy as \textbf{\textit{decoupled training}}. Specifically, as shown in Fig. \ref{fig-method}, we start from original dataset $\mathcal{D} = \{(x, y)\}$, where positive and negative labels ($y=\texttt{True}$ and $y=\texttt{False}$) are balanced at a 1:1 ratio. The full dataset is used exclusively to supervise the accuracy reward $\mathcal{R}_{\text{acc}}(\cdot)$. In addition, we duplicate all samples with $y=\texttt{False}$, this duplicated subset is supervised solely by the meta-verification reward $\mathcal{R}_{\text{meta}}(\cdot)$. In this way, we explicitly decouple the verifier and meta-verification objectives at the dataset level and conduct mixed training across the two tasks.

We provide a detailed gradient-level analysis of both joint training and decoupled training:
\begin{theorem}
\label{theorem2}
Consider the gradient estimator for meta-verification in joint RLVR training:
\begin{equation}
\mathcal{G}_{\text{joint}}
=
\mathcal{R}_{\text{acc}}(x,\hat{y})
\cdot
\mathcal{R}_{\text{meta}}(e)
\cdot
\nabla_\theta \log \pi_\theta(e \mid x, \hat{y}),
\end{equation}
and the gradient estimator in decoupled training:
\begin{equation}
\mathcal{G}_{\text{dec}}
=
\mathcal{R}_{\text{meta}}(e)
\cdot
\nabla_\theta \log \pi_\theta(e \mid x, \hat{y}),
\end{equation}
where samples are drawn from the conditional distribution $x \sim \mathcal{D} \mid y=\text{False}$.
Then, the gradient variance in joint training satisfies:
\begin{equation}
\begin{aligned}
\mathrm{Var}(\mathcal{G}_{\text{joint}})
=&
p_{\text{acc}}(\theta)\mathrm{Var}(\mathcal{G}_{\text{dec}})
+\\
&p_{\text{acc}}(\theta)\left(1-p_{\text{acc}}(\theta)\right)
\left\|\mathbb{E}[\mathcal{G}_{\text{dec}}]\right\|^2,
\end{aligned}
\end{equation}
and consequently,
\begin{equation}
\mathrm{Var}(\mathcal{G}_{\text{joint}})
\;\ge\;
p_{\text{acc}}(\theta)\,\mathrm{Var}(\mathcal{G}_{\text{dec}}),
\end{equation}
with strict inequality when $\mathbb{E}[\mathcal{G}_{\text{dec}}]\neq 0$ and $p_{\text{acc}}(\theta)\in(0,1)$.
\end{theorem}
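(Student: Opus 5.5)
The plan is to write the joint estimator as the decoupled estimator multiplied by a Bernoulli gate, and then expand the variance directly. Following Lemma~\ref{lemma1}, on the subpopulation $x \sim \mathcal{D}\mid y=\texttt{False}$ we set $B := \mathcal{R}_{\text{acc}}(x,\hat{y}) \in \{0,1\}$, so that $\mathcal{G}_{\text{joint}} = B\,\mathcal{G}_{\text{dec}}$. For vector-valued estimators we interpret $\mathrm{Var}(\mathcal{G}) := \mathbb{E}\|\mathcal{G}\|^2 - \|\mathbb{E}\mathcal{G}\|^2$, i.e.\ the trace of the covariance.

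The key modelling step is to make precise the sense in which $B$ and $\mathcal{G}_{\text{dec}}$ can be treated as independent, with $\mathbb{P}(B=1)=p_{\text{acc}}(\theta)$. I would argue this as follows. In decoupled training, the meta-verification rationale $e$ is generated by the explanation branch of the policy, $\pi_\theta(e\mid x,\hat{y})$, evaluated on the duplicated negative samples. The decision indicator $B$, in contrast, is determined only by the sampled $\hat{y}$. So $\mathcal{G}_{\text{dec}}$ is the quantity that would be realised if the gate were open, and the gate opens with probability $p_{\text{acc}}(\theta)$.

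I will state explicitly the assumption this relies on: the gate $B$ is independent of $\mathcal{G}_{\text{dec}}$. Equivalently, the acceptance probability does not covary with the explanation-level gradient, which holds for instance if $\mathbb{P}(\hat{y}=y\mid x)$ is constant over the negative subset. Here $p_{\text{acc}}$ is read as the accuracy on that subset.

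Under this assumption the computation is short:
\begin{align*}
\mathbb{E}[\mathcal{G}_{\text{joint}}] &= \mathbb{E}[B]\,\mathbb{E}[\mathcal{G}_{\text{dec}}] = p_{\text{acc}}\,\mathbb{E}[\mathcal{G}_{\text{dec}}],\\
\mathbb{E}\|\mathcal{G}_{\text{joint}}\|^2 &= \mathbb{E}[B^2]\,\mathbb{E}\|\mathcal{G}_{\text{dec}}\|^2 = p_{\text{acc}}\bigl(\mathrm{Var}(\mathcal{G}_{\text{dec}})+\|\mathbb{E}\mathcal{G}_{\text{dec}}\|^2\bigr),
\end{align*}
where the second line uses $B^2=B$. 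Subtracting $\|\mathbb{E}\mathcal{G}_{\text{joint}}\|^2 = p_{\text{acc}}^2\|\mathbb{E}\mathcal{G}_{\text{dec}}\|^2$ gives
\[
\mathrm{Var}(\mathcal{G}_{\text{joint}}) = p_{\text{acc}}\,\mathrm{Var}(\mathcal{G}_{\text{dec}}) + p_{\text{acc}}(1-p_{\text{acc}})\,\|\mathbb{E}\mathcal{G}_{\text{dec}}\|^2,
\]
which is the claimed identity.

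The inequality follows because the second term is nonnegative. Strictness holds exactly when $p_{\text{acc}}(1-p_{\text{acc}})>0$ and $\mathbb{E}\mathcal{G}_{\text{dec}}\neq 0$, which are the stated conditions. An alternative route is to condition on $B$ and apply the law of total variance, $\mathrm{Var} = \mathbb{E}[\mathrm{Var}(\cdot\mid B)] + \mathrm{Var}(\mathbb{E}[\cdot\mid B])$. The second term is then the variance of a two-point distribution with values $0$ and $\mathbb{E}\mathcal{G}_{\text{dec}}$, which gives $p(1-p)\|\mathbb{E}\mathcal{G}_{\text{dec}}\|^2$ directly. This version needs only the weaker assumption that the conditional law of $\mathcal{G}_{\text{dec}}$ given $B=1$ equals its unconditional law.

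The main obstacle is justifying the independence (or conditional-law) assumption. In reality $e$ is emitted jointly with $\hat{y}$, and a rationale exists only when $\hat{y}=\texttt{False}$, so the coupling must be made explicit. I would handle this by defining $\mathcal{G}_{\text{dec}}$ through the conditional rationale distribution $\pi_\theta(e\mid x,\hat{y}=\texttt{False})$. Once that definition is fixed, the rest is the routine algebra above.
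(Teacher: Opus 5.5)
Your proof is correct and is the paper's argument: you write $\mathcal{G}_{\text{joint}} = B\,\mathcal{G}_{\text{dec}}$ with a Bernoulli gate, use $B^2 = B$ to get the first and second moments, and subtract. Your proof improves on the paper's in one respect: you state explicitly that the gate is independent of $\mathcal{G}_{\text{dec}}$, and you read $p_{\text{acc}}(\theta)$ as the accuracy on the negative subset. The paper uses this assumption silently in the step $\mathbb{E}[I\,\mathcal{G}_{\text{dec}}] = p_{\text{acc}}(\theta)\,\mathbb{E}[\mathcal{G}_{\text{dec}}]$, and without it the identity can fail. One small correction: for a binary gate with $p_{\text{acc}}(\theta) \in (0,1)$, asking that the law of $\mathcal{G}_{\text{dec}}$ given $B=1$ equal its unconditional law is equivalent to independence, not weaker; what is genuinely weaker, and suffices for both of your routes, is that $B$ be uncorrelated with $\mathcal{G}_{\text{dec}}$ and with $\|\mathcal{G}_{\text{dec}}\|^2$.
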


\begin{corollary}
\label{corollary1}
Let the signal-to-noise ratio (SNR) of a gradient estimator be defined as
\begin{equation}
\mathrm{SNR}(\mathcal{G})
=
\frac{\left\|\mathbb{E}[\mathcal{G}]\right\|^2}
{\mathrm{Var}(\mathcal{G})}.
\end{equation}
Then, the SNR of the meta-verification gradient under joint training satisfies:
\begin{equation}
\mathrm{SNR}(\mathcal{G}_{\text{joint}})
\le
p_{\text{acc}}(\theta)\,
\mathrm{SNR}(\mathcal{G}_{\text{dec}}),
\end{equation}
with strict inequality whenever $p_{\text{acc}}(\theta)\in(0,1)$.
\end{corollary}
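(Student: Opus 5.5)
The plan is to reuse the structure behind Theorem~\ref{theorem2} and compute the numerator and denominator of $\mathrm{SNR}(\mathcal{G}_{\text{joint}})$ separately. I will work under the same modelling assumption as in that theorem. On $x\sim\mathcal{D}\mid y=\texttt{False}$, write $\mathcal{G}_{\text{joint}} = B\cdot\mathcal{G}_{\text{dec}}$, where $B=\mathcal{R}_{\text{acc}}(x,\hat y)\in\{0,1\}$ is a Bernoulli gate with $\mathbb{E}[B]=p_{\text{acc}}(\theta)=:p$, and $B$ is treated as independent of $\mathcal{G}_{\text{dec}}$. Set $\mu=\mathbb{E}[\mathcal{G}_{\text{dec}}]$ and $V=\mathrm{Var}(\mathcal{G}_{\text{dec}})$, where $\mathrm{Var}$ means the trace variance $\mathbb{E}\|\mathcal{G}-\mathbb{E}\mathcal{G}\|^2$, as in Theorem~\ref{theorem2}.

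\textbf{Step 1 (mean).} By independence, $\mathbb{E}[\mathcal{G}_{\text{joint}}]=\mathbb{E}[B]\,\mu = p\mu$. Hence the numerator is $\|\mathbb{E}[\mathcal{G}_{\text{joint}}]\|^2 = p^2\|\mu\|^2$.

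\textbf{Step 2 (variance).} I will invoke the identity of Theorem~\ref{theorem2} directly:
\begin{equation*}
\mathrm{Var}(\mathcal{G}_{\text{joint}}) = pV + p(1-p)\|\mu\|^2 .
\end{equation*}
For a self-contained check, this follows from $\mathbb{E}\|B\mathcal{G}_{\text{dec}}\|^2 = p(V+\|\mu\|^2)$ after subtracting $p^2\|\mu\|^2$.

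\textbf{Step 3 (combine).} Assuming $V>0$, so that both SNRs are defined, and writing $S=\mathrm{SNR}(\mathcal{G}_{\text{dec}})=\|\mu\|^2/V$, we get
\begin{equation*}
\mathrm{SNR}(\mathcal{G}_{\text{joint}}) = \frac{p^2\|\mu\|^2}{pV + p(1-p)\|\mu\|^2} = \frac{p\,S}{1+(1-p)S} \le p\,S .
\end{equation*}
The inequality holds because $(1-p)S\ge 0$. This closed form is sharper than the stated bound, and I would record it as well.

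\textbf{Step 4 (strictness).} The inequality in Step 3 is strict exactly when $(1-p)S>0$, that is, when $p<1$ and $\mu\neq 0$. We also need $p>0$ so that the left side is defined and nonzero. This is the one delicate point. The statement claims strictness whenever $p\in(0,1)$, but if $\mu=0$ both sides vanish and equality holds. I would therefore add the hypothesis $\mathbb{E}[\mathcal{G}_{\text{dec}}]\neq 0$, matching the strictness condition already stated in Theorem~\ref{theorem2}.

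The main obstacle is not the algebra but justifying the independence (or at least uncorrelatedness) of the gate $B$ and $\mathcal{G}_{\text{dec}}$, on which Steps 1–2 rest. Since Theorem~\ref{theorem2} is assumed, I will inherit that assumption and state it explicitly rather than re-derive it.
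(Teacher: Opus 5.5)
Your proposal is correct and follows the paper's proof essentially step for step. You substitute $\|\mathbb{E}[\mathcal{G}_{\text{joint}}]\|^2 = p_{\text{acc}}(\theta)^2\|\mathbb{E}[\mathcal{G}_{\text{dec}}]\|^2$ and the variance identity of Theorem~\ref{theorem2} into the SNR, cancel one factor of $p_{\text{acc}}(\theta)$, and lower-bound the denominator by $\mathrm{Var}(\mathcal{G}_{\text{dec}})$; your closed form $pS/(1+(1-p)S)$ is just the paper's intermediate expression divided through by $\mathrm{Var}(\mathcal{G}_{\text{dec}})$. Your caveats are also well placed: the paper's own proof adds $\mathbb{E}[\mathcal{G}_{\text{dec}}]\neq 0$ parenthetically to get strictness, and its Theorem~\ref{theorem2} computation silently relies on the gate being independent of $\mathcal{G}_{\text{dec}}$, which you state explicitly.
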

Theorem~\ref{theorem2} shows that in joint RL training, the meta-verification gradient is effectively multiplied by a Bernoulli variable controlled by $p_{\text{acc}}(\theta)$, which both suppresses the expected gradient magnitude and introduces an additional variance term. Corollary~\ref{corollary1} further indicates this gating directly reduces the signal-to-noise ratio of the meta-verification gradient by a factor of $p_{\text{acc}}(\theta)$. Consequently, when the verifier’s judgment accuracy is imperfect, joint training yields sparse and noisy learning signals for meta-verification, whereas decoupled training removes this dependency and provides a more stable optimization signal.
\paragraph{Experimental Setup.} Following the same experimental setting as in Section~\ref{ablation1}, we evaluate the effectiveness of decoupled training in comparison with joint training. We decouple binary judgment and meta-verification into two independent learning objectives, each supervised by a dedicated reward model. Specifically, all samples with $y=\texttt{False}$ are duplicated and treated as grounding-only data, supervised exclusively by the meta-verification reward (e.g., IoU), while the remaining samples are supervised solely by the accuracy reward. These two data streams are jointly mixed during reinforcement learning. We evaluate all models on ViVerBench \citep{zhang2025generative} and RefCOCO \citep{yu2016modeling}, which respectively measure visual outcome judgment and visual grounding capability.

\begin{table*}[t]
\vspace{2mm}
\centering
\caption{
    Performance on ViVerBench of joint training and decoupled training.
}
\begin{threeparttable}
\resizebox{\linewidth}{!}{
\begin{tabular}{l|ccc|cc|cc|ccc|cccc|cc|l} 
\toprule
\multirow{2}{*}{\textbf{Model / Metric}} & \multicolumn{3}{c}{\textbf{Concept Existence}} & \multicolumn{2}{c}{\textbf{Object Relation}} & \multicolumn{2}{c}{\textbf{World Dynamics}} & \multicolumn{3}{c}{\textbf{Image Annotation}} & \multicolumn{4}{c}{\textbf{State Value Evaluation}} & \multicolumn{2}{c}{\textbf{STEM}} & \multirow{2}{*}{\textbf{Overall}} \\
\cmidrule(lr){2-4} \cmidrule(lr){5-6} \cmidrule(lr){7-8} \cmidrule(lr){9-11} \cmidrule(lr){12-15} \cmidrule(lr){16-17}
& Obj. & Attr. & Abs.P. & Spat. & N-Spat. & S-Phy & D-Phy & BBox & Point & Count & Maze & F.Lake & Robot. & GUI & Chart & LaTeX & \\ 
\midrule

\textbf{OmniVerifier 7B} & 0.701 &0.703 & 0.521 &0.808 &0.739 & 0.525 & 0.596 & 0.770 &0.659 & 0.527 & 0.490 & 0.482 & 0.728 & 0.634 & 0.600 & 0.918 & 0.650 \\

\textbf{OmniVerifier 7B(Joint)} 
& 0.723
& 0.733
& 0.513
& 0.833
& 0.761
& 0.487
& 0.564
& 0.827
& 0.716
& 0.640
& 0.497
& 0.436
& 0.601
& 0.694
& 0.623
& 0.928
& 0.661 \\

\textbf{OmniVerifier 7B(Decoupled)} 
& \cellcolor{mycolor_gray}{0.741}
& \cellcolor{mycolor_gray}{0.754}
& \cellcolor{mycolor_gray}{0.506}
& \cellcolor{mycolor_gray}{0.846}
& \cellcolor{mycolor_gray}{0.769}
& \cellcolor{mycolor_gray}{0.467}
& \cellcolor{mycolor_gray}{0.535}
& \cellcolor{mycolor_gray}{0.854}
& \cellcolor{mycolor_gray}{0.741}
& \cellcolor{mycolor_gray}{0.710}
& \cellcolor{mycolor_gray}{0.441}
& \cellcolor{mycolor_gray}{0.443}
& \cellcolor{mycolor_gray}{0.589}
& \cellcolor{mycolor_gray}{0.722}
& \cellcolor{mycolor_gray}{0.639}
& \cellcolor{mycolor_gray}{0.931}
& \cellcolor{mycolor_gray}{0.668} \\

\textbf{Qwen 3-VL 8B} & 0.710 & 0.690 &0.562 &0.642 & 0.716 & 0.604 & 0.593 & 0.831 & 0.741  &0.489  & 0.420&0.693  & 0.671 & 0.787 & 0.540 & 0.773 &0.654  \\
\textbf{Qwen 3-VL 8B(Joint)} 
& 0.732
& 0.724
& 0.534
& 0.704
& 0.754
& 0.595
& 0.582
& 0.847
& 0.773
& 0.523
& 0.458
& 0.664
& 0.639
& 0.815
& 0.568
& 0.824
& 0.671 \\

\textbf{Qwen 3-VL 8B(Decoupled)} 
& \cellcolor{mycolor_gray}{0.750}
& \cellcolor{mycolor_gray}{0.733}
& \cellcolor{mycolor_gray}{0.527}
& \cellcolor{mycolor_gray}{0.717}
& \cellcolor{mycolor_gray}{0.768}
& \cellcolor{mycolor_gray}{0.583}
& \cellcolor{mycolor_gray}{0.596}
& \cellcolor{mycolor_gray}{0.863}
& \cellcolor{mycolor_gray}{0.784}
& \cellcolor{mycolor_gray}{0.565}
& \cellcolor{mycolor_gray}{0.380}
& \cellcolor{mycolor_gray}{0.717}
& \cellcolor{mycolor_gray}{0.652}
& \cellcolor{mycolor_gray}{0.838}
& \cellcolor{mycolor_gray}{0.572}
& \cellcolor{mycolor_gray}{0.835}
& \cellcolor{mycolor_gray}{0.680} \\
\bottomrule
\end{tabular}
} 
\end{threeparttable}
\label{tab:rule_based_vlm_viverbench}
\end{table*}

\begin{table*}[t]
\centering
\caption{
    Performance on RefCOCO of joint training and decoupled training.
}
\begin{threeparttable}
\resizebox{\linewidth}{!}{
\begin{tabular}{l|cccccccc|c}
\toprule
\textbf{Model / Metric}
& RefCOCO val 
& RefCOCO test-A
& RefCOCO test-B
& RefCOCO+ val
& RefCOCO+ test-A
& RefCOCO+ test-B
& RefCOCOg val
& RefCOCOg test
& \textbf{Overall} \\
\midrule

\textbf{OmniVerifier 7B} 
& 0.807 & 0.890 & 0.750 & 0.757 & 0.810 & 0.707 & 0.733 & 0.733 & 0.773 \\

\textbf{OmniVerifier 7B (Joint)} 
& 0.810
& 0.897
& 0.765
& 0.760
& 0.773
& 0.733
& 0.758
& 0.744
& 0.780 \\

\textbf{OmniVerifier 7B (Decoupled)} 
& \cellcolor{mycolor_gray}{0.837}
& \cellcolor{mycolor_gray}{0.913}
& \cellcolor{mycolor_gray}{0.783}
& \cellcolor{mycolor_gray}{0.737}
& \cellcolor{mycolor_gray}{0.776}
& \cellcolor{mycolor_gray}{0.753}
& \cellcolor{mycolor_gray}{0.766}
& \cellcolor{mycolor_gray}{0.763}
& \cellcolor{mycolor_gray}{0.791} \\

\textbf{Qwen 3-VL 8B} 
& 0.860 &  0.883 &0.820 &0.747  & 0.867 & 0.743 &0.837  &  0.893& 0.831  \\

\textbf{Qwen 3-VL 8B (Joint)} 
& 0.887
& 0.910
& 0.834
& 0.763
& 0.873
& 0.753
& 0.855
& 0.901
& 0.847 \\

\textbf{Qwen 3-VL 8B (Decoupled)} 
& \cellcolor{mycolor_gray}{0.898}
& \cellcolor{mycolor_gray}{0.917}
& \cellcolor{mycolor_gray}{0.855}
& \cellcolor{mycolor_gray}{0.770}
& \cellcolor{mycolor_gray}{0.910}
& \cellcolor{mycolor_gray}{0.777}
& \cellcolor{mycolor_gray}{0.870}
& \cellcolor{mycolor_gray}{0.931}
& \cellcolor{mycolor_gray}{0.866} \\

\bottomrule
\end{tabular}
}
\end{threeparttable}
\label{tab:rule_based_vlm_refcoco}
\end{table*}

\paragraph{Experimental Analysis.} From Fig. \ref{fig-ablation2}, we observe that decoupled training consistently outperforms joint training on both OmniVerifier-7B and Qwen3-VL-8B. In particular, on ViVerBench tasks that are closely related to visual grounding, such as \textit{Bounding Box} , \textit{Counting}, and \textit{Pointing}, decoupled training yields substantially better performance. This improvement can be attributed to the more stable meta-verification gradients provided by decoupling, which enable the verifier to learn more precise and reliable grounding-oriented visual judgments. As further evidenced in Table \ref{tab:rule_based_vlm_refcoco}, models trained with the decoupled strategy also exhibit clear advantages on RefCOCO, demonstrating stronger visual grounding capability. These results indicate that the error localization ability learned by visual verifiers can effectively generalize to generic grounding tasks, rather than being confined to verifier-specific supervision. These findings suggest that, for visual verifier training, decoupled optimization constitutes a more robust and effective meta-verification reinforcement learning strategy than joint training, due to its ability to disentangle heterogeneous learning objectives and stabilize the training dynamics.
\begin{table}[h!]
\begin{minipage}{\columnwidth}    
    \centering
    \begin{tcolorbox}[left=1em, right=1em]
        \small
    \vspace{-0.6em}
        \begin{finding}  
            Decoupling RL rewards for binary judgment and meta-verification outperforms joint training.
        \end{finding}
    \end{tcolorbox}
\end{minipage}
\vspace{-1em}
\end{table}

\begin{figure*}[t]
    \centerline{\includegraphics[width=.95\textwidth]{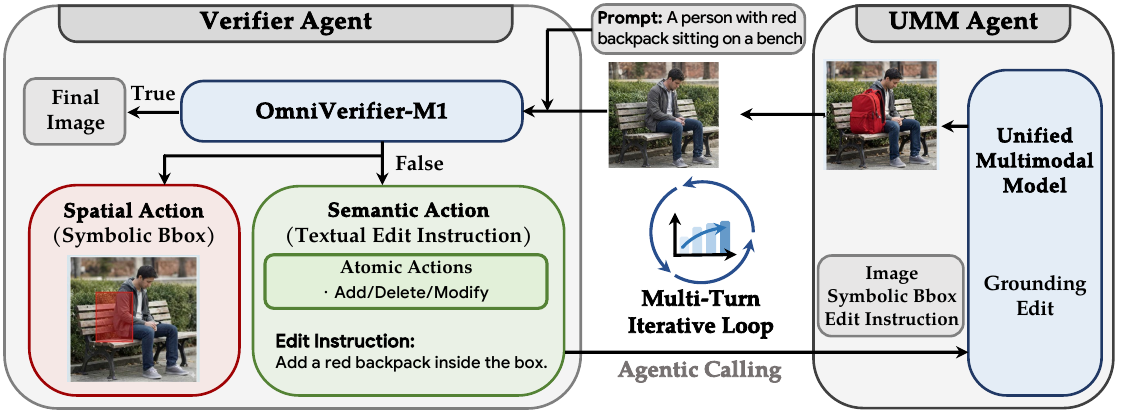}}
    \caption{Pipeline of M1-TTS: a fine-grained agentic generation system for unified multimodal models.}
    \label{m1-tts}
\end{figure*}

\section{Multimodal Verifier for Agentic Generation}
\subsection{OmniVerifier-M1: Generalist Multimodal Verifier}

Based on the above two experimental findings, we train OmniVerifier-M1, a generalist multimodal verifier built on Qwen3-VL-8B \citep{Qwen3-VL}, which uses symbolic bounding boxes as output rationales and leverages rule-based meta-verification reward feedback through decoupled reinforcement training.

As shown in the last row of Table \ref{tab:rule_based_vlm_viverbench}, OmniVerifier achieves a score of 0.68 on ViVerBench \citep{zhang2025generative}, with notable gains on key text-to-image verification tasks such as \textit{Object}, \textit{Attribute},  \textit{Spatial Relationship}, and \textit{Bounding Box}. This approach also significantly reduces training overhead and demonstrates the potential of a robust reinforcement learning paradigm for integrating meta-verification into verifier training as a generalizable framework.

\subsection{M1-TTS: Fine-Grained Agentic Generation}

OmniVerifier-M1 provides fine-grained, actionable feedback that precisely localizes erroneous regions in images, rather than offering only coarse, text-level explanations from a global perspective. Building on this capability, we design \textbf{M1-TTS}, an agentic generation system that leverages a visual verifier and a unified multimodal model (UMM) to perform fine-grained, precise, and high-difficulty image world modeling tasks. As shown in Fig. \ref{m1-tts}, M1-TTS consists of two components: a Verifier Agent and a UMM Agent. 

\paragraph{Verifier Agent.} The Verifier Agent serves as both the evaluator and diagnostician of input images. Given the current image whether newly generated by the UMM or edited from the previous iteration, if it is not aligned with the input prompt, OmniVerifier-M1 produces a structured action composed of two parts:

\begin{figure*}[t!]
    \centerline{\includegraphics[width=\textwidth]{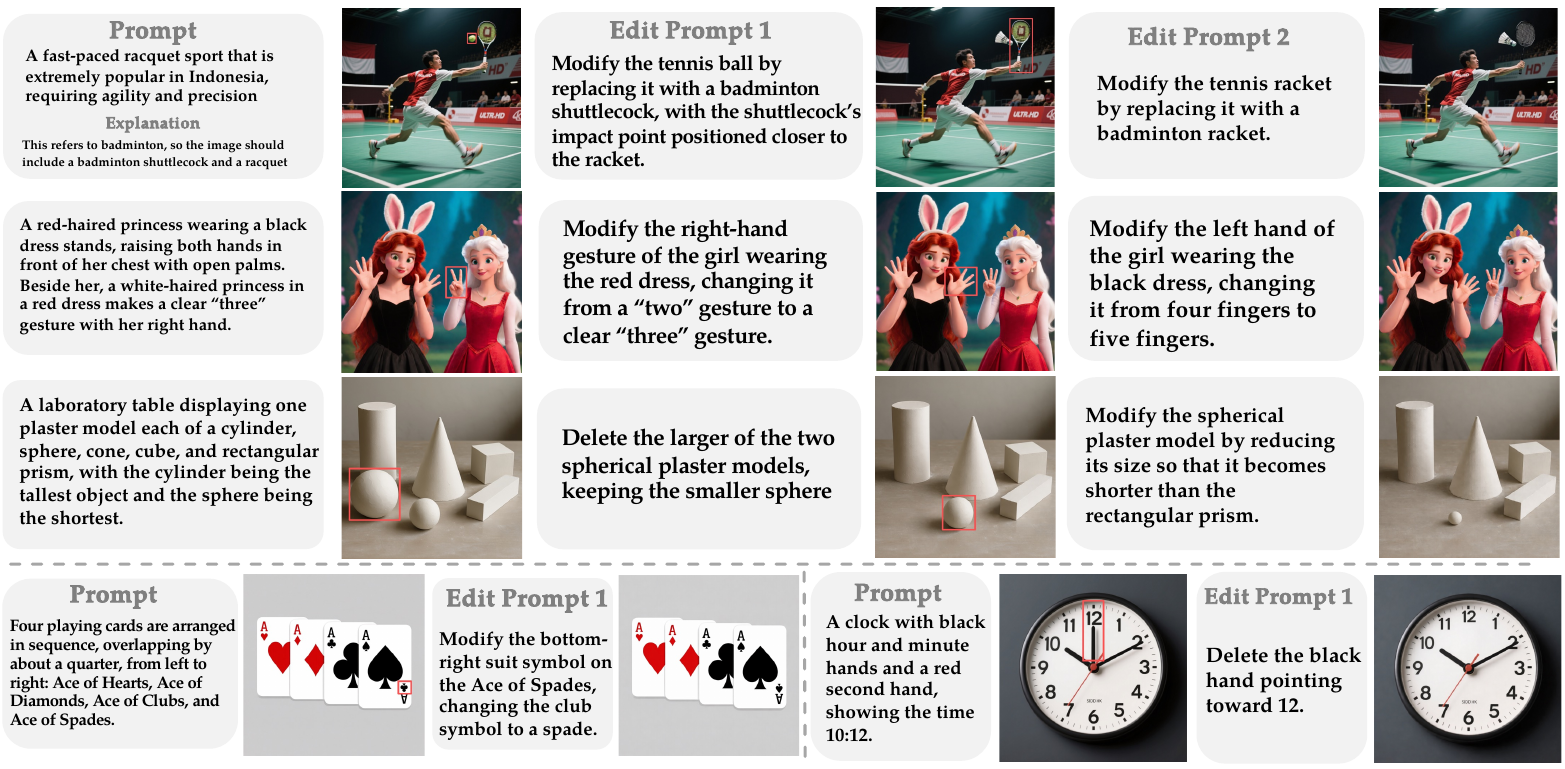}}
    \caption{Qualitative visualization of M1-TTS.}
    \label{fig-exp-tts}
\end{figure*}

\vspace{-3mm}
\begin{itemize}
    \item \textbf{Spatial Action.} OmniVerifier-M1 generates symbolic bounding boxes to precisely localize erroneous regions in the image. By explicitly identifying where corrections are needed, this spatial signal directly simplifies the UMM’s perception and reasoning burden, allowing a smooth and effective transition from error diagnosis to targeted image editing.
    \item \textbf{Semantic Action.} In addition to erroneous localization, OmniVerifier-M1 provides explicit semantic editing instructions. We predefine a set of atomic editing actions such as \textit{Add}, \textit{Delete}, and \textit{Modify}, and require OmniVerifier-M1 to compose accurate, actionable edit commands grounded in these atomic operations. This design enforces structured, interpretable refinement steps
\end{itemize}
\paragraph{UMM Agent.} The Unified Multimodal Model (UMM) performs image editing by taking as input the image, symbolic bboxes, and explicit editing instructions. The use of symbolic bboxes eliminates the need for the UMM to first fully parse complex editing instructions and then infer the corresponding spatial regions during editing, thereby significantly simplifying the editing process and improving editing precision.

M1-TTS supports dynamic multi-round image optimization, iteratively refining the image until the verifier outputs \texttt{True} or a maximum number of iterations is reached. Within M1-TTS, the verifier not only injects prior world knowledge into the UMM’s self-refinement process through its strong reasoning and judgment capabilities, but also compensates for the UMM’s limitations in visual perception and spatial localization. As a result, the verifier serves as a critical supervisory and guiding component that enables accurate, fine-grained, and iterative image refinement.
\vspace{-3mm}

\paragraph{Experimental Setup.} We conduct M1-TTS experiments on two strong generative models, RePlan \citep{wu2025qwen, qu2025replan} and GPT-Image-1.5, with the maximum number of iterative steps set to 10. We evaluate M1-TTS on WISE \citep{niu2025wise} and T2I-CoreBench \citep{li2025easier} to assess its capabilities in world-knowledge–driven generation and complex image generation, respectively. All experiments are conducted on 8 NVIDIA A800 GPUs
\vspace{-3mm}

\begin{table*}[t]
\centering
\caption{
Performance comparison on WISE and T2I-CoreBench.
}
\begin{threeparttable}
\resizebox{\linewidth}{!}{
\begin{tabular}{l|ccccccc|ccc}
\toprule
\multirow{2}{*}{\textbf{Model}} 
& \multicolumn{7}{c|}{\textbf{WISE}} 
& \multicolumn{3}{c}{\textbf{T2I-CoreBench}} \\
\cmidrule(lr){2-8} \cmidrule(lr){9-11}
& Cultural & Time & Space & Biology & Physics & Chemistry & Overall
& Composition & Reasoning & Overall \\
\midrule

\textbf{Qwen-Image}
& 0.62 & 0.63 & 0.77 & 0.57 & 0.75 & 0.40 & 0.62
& 0.780 & 0.493 & 0.589 \\

\textbf{Qwen3-VL 8B+RePlan}
& 0.66 & 0.64 & 0.77 & 0.56 & 0.72 & 0.47 & 0.65
& 0.793 & 0.547 & 0.629 \\

\textbf{OmniVerifier-M1+RePlan}
& \cellcolor{mycolor_gray}{0.71}
& \cellcolor{mycolor_gray}{0.61}
& \cellcolor{mycolor_gray}{0.77}
& \cellcolor{mycolor_gray}{0.56}
& \cellcolor{mycolor_gray}{0.74}
& \cellcolor{mycolor_gray}{0.57}
& \cellcolor{mycolor_gray}{0.68}
& \cellcolor{mycolor_gray}{0.817}
& \cellcolor{mycolor_gray}{0.626}
& \cellcolor{mycolor_gray}{0.690} \\

\textbf{GPT-Image-1.5}
& 0.89 & 0.69 & 0.88 & 0.80 & 0.76 & 0.78 & 0.83
& 0.855 & 0.746 & 0.782 \\

\textbf{Qwen3-VL 8B+GPT-Image-1.5}
& 0.88 & 0.80 & 0.91 & 0.89 & 0.85 & 0.81 & 0.86
& 0.857 & 0.752 & 0.787 \\

\textbf{OmniVerifier-M1+GPT-Image-1.5}
& \cellcolor{mycolor_gray}{0.90}
& \cellcolor{mycolor_gray}{0.80}
& \cellcolor{mycolor_gray}{0.92}
& \cellcolor{mycolor_gray}{0.91}
& \cellcolor{mycolor_gray}{0.88}
& \cellcolor{mycolor_gray}{0.83}
& \cellcolor{mycolor_gray}{0.88}
& \cellcolor{mycolor_gray}{0.863}
& \cellcolor{mycolor_gray}{0.769}
& \cellcolor{mycolor_gray}{0.800} \\
\bottomrule
\end{tabular}
}
\end{threeparttable}
    \vspace{-4mm}
\label{tab:wise_t2i_corebench}
\end{table*}
\paragraph{Experimental Analysis.} 
As shown in Fig. \ref{fig-exp-tts}, M1-TTS perform dynamic image generation both accurately and efficiently. For regions that are misaligned with the prompt or contain severe errors, OmniVerifier-M1 provides precise guidance through bounding boxes and explanatory signals. This is especially important for complex images with objects that share similar attributes, where bounding boxes more effectively highlight issues. Furthermore, as reported in Table \ref{tab:wise_t2i_corebench}, M1-TTS achieves substantial improvements on world knowledge (WISE) and complex text-to-image benchmarks (T2I-CoreBench), whether using RePlan or GPT-Image-1.5. These gains stem from OmniVerifier-M1’s ability to inject world knowledge while leveraging its interactive bbox outputs to guide the generative model in refining the image.

\section{Conclusion}

In this work, we present Multimodal Meta-Verification, a framework that extends verifier training beyond binary judgments by leveraging symbolic localization feedback. We identify two key findings: (1) Symbolic outputs provide structured and efficient rationales that outperform textual explanations, enabling rule-based reinforcement learning while mitigating reward hacking; (2)  Decoupled RL objectives for binary judgment and meta-verification facilitates more robust and efficient optimization, yielding consistently higher verification accuracy than joint training. Building on these insights, we develop a generalist multimodal verifier OmniVerifier-M1 and further introduce M1-TTS, a verifier-driven agentic generation system that performs region-level self-correction through iterative reasoning and action. We provide a robust training paradigm for multimodal meta-verification and leave the exploration of broader applications of verifiers to future work.

\section*{Impact Statements}
This paper introduces OmniVerifier-M1 and M1-TTS, and presents a robust framework for training multimodal verifiers with meta-verification feedback.

The methods and findings in this work are intended to advance research in efficient and scalable machine learning systems. We do not anticipate immediate negative societal impacts beyond those commonly associated with deploying more capable and efficient language model systems.

\newpage
\bibliography{example_paper}
\bibliographystyle{colm2025_conference}

\newpage
\appendix
\onecolumn
\section{Theoretical Proof}\label{appendix-proof}
\subsection{Proof of Lemma~\ref{lemma1}}
\begin{proof}
The expected objective for joint training is defined as:
\begin{equation}
    J_{\text{joint}}(\theta) = \mathbb{E}_{x \sim \mathcal{D}} \mathbb{E}_{(o,\hat{y},e) \sim \pi_\theta(\cdot|x)} \left[ \mathcal{R}_{\text{acc}}(\hat{y}, y) \cdot \mathcal{R}_{\text{meta}}(e) \right],
\end{equation}
The policy gradient of the joint objective can be written as:
\begin{equation}\label{eq-16}
    \nabla_\theta J_{\text{joint}} = \mathbb{E}\left[ \mathcal{R}_{\text{acc}}(\hat{y}, y) \cdot \mathcal{R}_{\text{meta}}(e) \cdot \nabla_\theta \log \pi_\theta(o, \hat{y}, e \mid x) \right].
\end{equation}
Since the prediction $\hat{y}$ and the meta-verification output $e$ are jointly generated by the same policy, the joint log-probability can be factorized as:
\begin{equation}
    \log \pi_\theta(o, \hat{y}, e \mid x) = \log \pi_\theta(\hat{y} \mid x) + \log \pi_\theta(e \mid x, \hat{y}).
\end{equation}
Substituting this decomposition into Eq.~\ref{eq-16}, we obtain:
\begin{align}
\nabla_\theta J_{\text{joint}}
&= \mathbb{E}\Big[
\mathcal{R}_{\text{acc}}(\hat{y}, y) \cdot \mathcal{R}_{\text{meta}}(e)
\cdot \nabla_\theta \big(
\log \pi_\theta(\hat{y} \mid x) + \log \pi_\theta(e \mid x, \hat{y})
\big)
\Big] \\
&= \mathbb{E}\Big[
\mathcal{R}_{\text{acc}}(\hat{y}, y) \cdot \mathcal{R}_{\text{meta}}(e)
\cdot \nabla_\theta \log \pi_\theta(\hat{y} \mid x)
\Big] + \mathbb{E}\Big[
\mathcal{R}_{\text{acc}}(\hat{y}, y) \cdot \mathcal{R}_{\text{meta}}(e)
\cdot \nabla_\theta \log \pi_\theta(e \mid x, \hat{y})
\Big].
\end{align}

The gradient terms related to explanations:
\begin{equation}
    \nabla_\theta J_{\text{joint}}^{(e)} = \mathbb{E}_{x \sim \mathcal{D}, (\hat{y}, e) \sim \pi_\theta} \left[ \mathcal{R}_{\text{acc}}(\hat{y}, y) \cdot \mathcal{R}_{\text{meta}}(e) \nabla_\theta \log \pi_\theta(e \mid x, \hat{y}) \right].
\end{equation}
Note that:
\begin{equation}
\mathcal{R}_{\text{acc}}(\hat{y}, y) = \mathbf{1}\left[\hat{y} = y\right].
\end{equation}

Therefore, when the verifier makes an incorrect prediction (i.e., $\hat{y} \neq y$):
\begin{equation}
\mathcal{R}_{\text{acc}}(\hat{y}, y) = 0 \quad \Rightarrow \quad \nabla_\theta J_{\text{joint}}^{(e)} = 0.
\end{equation}
\end{proof}
\subsection{Proof of Theorme~\ref{theorem1}}
\begin{proof}
According to the definition of the joint training objective, the gradient term $\nabla_\theta J_{\text{joint}}^{(e)}$ with respect to the explanation generation component $e$ can be expressed as:
\begin{equation}
    \nabla_\theta J_{\text{joint}}^{(e)} = \mathbb{E}_{x \sim \mathcal{D}, (\hat{y}, e) \sim \pi_\theta} \left[ \mathcal{R}_{\text{acc}}(\hat{y}, y) \cdot \mathcal{R}_{\text{meta}}(e) \nabla_\theta \log \pi_\theta(e \mid x, \hat{y}) \right].
\end{equation}
Considering that the accuracy reward $\mathcal{R}_{\text{acc}}(\hat{y}, y)$ is an indicator function $\mathbf{1}[\hat{y} = y]$, we apply Jensen's inequality to the $\ell_2$-norm of the above gradient, and the derivation proceeds as follows:
\begin{align}
\|\nabla_\theta J_{\text{joint}}^{(e)}\| 
&= \left\| \mathbb{E}_{x \sim \mathcal{D}} \left[ \mathbb{E}_{(\hat{y}, e) \sim \pi_\theta} \left[ \mathbf{1}[\hat{y} = y] \cdot \mathcal{R}_{\text{meta}}(e) \nabla_\theta \log \pi_\theta(e \mid x, \hat{y}) \right] \right] \right\| \\
&\le \mathbb{E}_{x \sim \mathcal{D}} \left[ \mathbb{E}_{(\hat{y}, e) \sim \pi_\theta} \left[ \left\| \mathbf{1}[\hat{y} = y] \cdot \mathcal{R}_{\text{meta}}(e) \nabla_\theta \log \pi_\theta(e \mid x, \hat{y}) \right\| \right] \right] \\
&= \mathbb{E}_{x \sim \mathcal{D}} \left[ \mathbb{E}_{\hat{y} \sim \pi_\theta(\cdot|x)} \left[ \mathbb{E}_{e \sim \pi_\theta(\cdot|x, \hat{y})} \left[ \mathbf{1}[\hat{y} = y] \cdot \left\| \mathcal{R}_{\text{meta}}(e) \nabla_\theta \log \pi_\theta(e \mid x, \hat{y}) \right\| \right] \right] \right].
\end{align}
Using the property of conditional expectation, when $\hat{y} \neq y$, the indicator function $\mathbf{1}[\hat{y} = y] = 0$, so only the case where $\hat{y} = y$ needs to be retained in the expectation term:
\begin{align}
\|\nabla_\theta J_{\text{joint}}^{(e)}\|
&\le \mathbb{E}_{x \sim \mathcal{D}} \left[ \mathbb{P}_{\pi_\theta}(\hat{y} = y \mid x) \cdot \mathbb{E}_{e \sim \pi_\theta(\cdot \mid x, y)} \left[ \left\| \mathcal{R}_{\text{meta}}(e) \nabla_\theta \log \pi_\theta(e \mid x, y) \right\| \right] \right] \\
&\le \mathbb{E}_{x \sim \mathcal{D}} \left[ \mathbb{P}_{\pi_\theta}(\hat{y} = y \mid x) \right] \cdot \sup_{x} \mathbb{E}_{e \sim \pi_\theta} \left[ \left\| \mathcal{R}_{\text{meta}}(e) \nabla_\theta \log \pi_\theta(e \mid x, y) \right\| \right] \\
&= p_{\text{acc}}(\theta) \cdot C,
\end{align}
where $C = \sup_{x} \mathbb{E}_{e} \left[ \| \mathcal{R}_{\text{meta}}(e) \nabla_\theta \log \pi_\theta(e \mid x, y) \| \right]$ denotes the finite upper bound of the gradient term. Based on the definition of the verification accuracy $p_{\text{acc}}(\theta) = \mathbb{E}_{x \sim \mathcal{D}} [\mathbb{P}_{\pi_\theta}(\hat{y} = y \mid x)]$, the proof is complete.
\end{proof}

\subsection{Proof of Throrme \ref{theorem2}}

\begin{proof}
Let $I = \mathcal{R}_{\text{acc}}(x, \hat{y}) = \mathbf{1}[\hat{y} = y]$ be the indicator variable representing the accuracy of the verifier. By definition, $I$ follows a Bernoulli distribution with parameter $p_{\text{acc}}(\theta) = \mathbb{P}(\hat{y} = y)$, such that $\mathbb{E}[I] = p_{\text{acc}}(\theta)$ and $I^2 = I$. The joint gradient estimator can be written as $\mathcal{G}_{\text{joint}} = I \cdot \mathcal{G}_{\text{dec}}$.

First, we calculate the first and second moments of $\mathcal{G}_{\text{joint}}$:
\begin{align}
\mathbb{E}[\mathcal{G}_{\text{joint}}] 
&= \mathbb{E}[I \cdot \mathcal{G}_{\text{dec}}] = p_{\text{acc}}(\theta) \mathbb{E}[\mathcal{G}_{\text{dec}}], \\
\mathbb{E}[\|\mathcal{G}_{\text{joint}}\|^2] 
&= \mathbb{E}[\|I \cdot \mathcal{G}_{\text{dec}}\|^2] = \mathbb{E}[I^2 \cdot \|\mathcal{G}_{\text{dec}}\|^2] = p_{\text{acc}}(\theta) \mathbb{E}[\|\mathcal{G}_{\text{dec}}\|^2].
\end{align}
Using the variance identity $\mathrm{Var}(Z) = \mathbb{E}[\|Z\|^2] - \|\mathbb{E}[Z]\|^2$ for a vector-valued random variable $Z$, we have:
\begin{align}
\mathrm{Var}(\mathcal{G}_{\text{joint}}) 
&= \mathbb{E}[\|\mathcal{G}_{\text{joint}}\|^2] - \|\mathbb{E}[\mathcal{G}_{\text{joint}}]\|^2 \\
&= p_{\text{acc}}(\theta) \mathbb{E}[\|\mathcal{G}_{\text{dec}}\|^2] - \|p_{\text{acc}}(\theta) \mathbb{E}[\mathcal{G}_{\text{dec}}]\|^2 \\
&= p_{\text{acc}}(\theta) \left( \mathrm{Var}(\mathcal{G}_{\text{dec}}) + \|\mathbb{E}[\mathcal{G}_{\text{dec}}]\|^2 \right) - p_{\text{acc}}(\theta)^2 \|\mathbb{E}[\mathcal{G}_{\text{dec}}]\|^2 \\
&= p_{\text{acc}}(\theta) \mathrm{Var}(\mathcal{G}_{\text{dec}}) + \left( p_{\text{acc}}(\theta) - p_{\text{acc}}(\theta)^2 \right) \|\mathbb{E}[\mathcal{G}_{\text{dec}}]\|^2 \\
&= p_{\text{acc}}(\theta) \mathrm{Var}(\mathcal{G}_{\text{dec}}) + p_{\text{acc}}(\theta)(1 - p_{\text{acc}}(\theta)) \|\mathbb{E}[\mathcal{G}_{\text{dec}}]\|^2.
\end{align}
Since $p_{\text{acc}}(\theta) \in [0, 1]$, the term $p_{\text{acc}}(\theta)(1 - p_{\text{acc}}(\theta)) \|\mathbb{E}[\mathcal{G}_{\text{dec}}]\|^2$ is always non-negative. Therefore:
\begin{equation}
\mathrm{Var}(\mathcal{G}_{\text{joint}}) \ge p_{\text{acc}}(\theta) \mathrm{Var}(\mathcal{G}_{\text{dec}}).
\end{equation}
The equality holds if and only if $p_{\text{acc}}(\theta)(1 - p_{\text{acc}}(\theta)) \|\mathbb{E}[\mathcal{G}_{\text{dec}}]\|^2 = 0$. Given $p_{\text{acc}}(\theta) \in (0, 1)$ and $\mathbb{E}[\mathcal{G}_{\text{dec}}] \neq 0$, the inequality is strict.
\end{proof}

\subsection{Proof of Corollary \ref{corollary1}}
\begin{proof}
Based on the results from Theorem \ref{theorem2}, we have the following expressions for the first moment and the variance of the joint gradient estimator:
\begin{align}
\|\mathbb{E}[\mathcal{G}_{\text{joint}}]\|^2 &= p_{\text{acc}}(\theta)^2 \|\mathbb{E}[\mathcal{G}_{\text{dec}}]\|^2, \\
\mathrm{Var}(\mathcal{G}_{\text{joint}}) &= p_{\text{acc}}(\theta) \mathrm{Var}(\mathcal{G}_{\text{dec}}) + p_{\text{acc}}(\theta)(1 - p_{\text{acc}}(\theta)) \|\mathbb{E}[\mathcal{G}_{\text{dec}}]\|^2.
\end{align}
Substituting these into the definition of $\mathrm{SNR}(\mathcal{G}_{\text{joint}})$, we obtain:
\begin{align}
\mathrm{SNR}(\mathcal{G}_{\text{joint}}) 
&= \frac{\|\mathbb{E}[\mathcal{G}_{\text{joint}}]\|^2}{\mathrm{Var}(\mathcal{G}_{\text{joint}})} \\
&= \frac{p_{\text{acc}}(\theta)^2 \|\mathbb{E}[\mathcal{G}_{\text{dec}}]\|^2}{p_{\text{acc}}(\theta) \mathrm{Var}(\mathcal{G}_{\text{dec}}) + p_{\text{acc}}(\theta)(1 - p_{\text{acc}}(\theta)) \|\mathbb{E}[\mathcal{G}_{\text{dec}}]\|^2} \\
&= \frac{p_{\text{acc}}(\theta) \|\mathbb{E}[\mathcal{G}_{\text{dec}}]\|^2}{\mathrm{Var}(\mathcal{G}_{\text{dec}}) + (1 - p_{\text{acc}}(\theta)) \|\mathbb{E}[\mathcal{G}_{\text{dec}}]\|^2}.
\end{align}
Since $1 - p_{\text{acc}}(\theta) \ge 0$, it follows that the denominator satisfies:
\begin{equation}
\mathrm{Var}(\mathcal{G}_{\text{dec}}) + (1 - p_{\text{acc}}(\theta)) \|\mathbb{E}[\mathcal{G}_{\text{dec}}]\|^2 \ge \mathrm{Var}(\mathcal{G}_{\text{dec}}).
\end{equation}
By applying this inequality to the denominator of the SNR expression, we have:
\begin{align}
\mathrm{SNR}(\mathcal{G}_{\text{joint}}) 
&\le \frac{p_{\text{acc}}(\theta) \|\mathbb{E}[\mathcal{G}_{\text{dec}}]\|^2}{\mathrm{Var}(\mathcal{G}_{\text{dec}})} \\
&= p_{\text{acc}}(\theta) \cdot \mathrm{SNR}(\mathcal{G}_{\text{dec}}).
\end{align}
For $p_{\text{acc}}(\theta) \in (0, 1)$, the term $(1 - p_{\text{acc}}(\theta)) \|\mathbb{E}[\mathcal{G}_{\text{dec}}]\|^2$ is strictly positive (assuming a non-vanishing signal $\|\mathbb{E}[\mathcal{G}_{\text{dec}}]\| > 0$), which makes the denominator strictly larger than $\mathrm{Var}(\mathcal{G}_{\text{dec}})$, thus confirming the strict inequality.
\end{proof}


\newpage

\section{Additional Experiments}

\subsection{Symbolic Point as Meta-Verification Signals}
We replace the symbolic bounding box with a symbolic point as the rule-based reward signal. In the bounding-box setting, for each negative sample, we compute the IoU between the predicted and ground-truth boxes and apply a threshold of 0.6 to obtain a binary gated reward, rather than using the continuous IoU value directly.

For consistency, we define the point-based reward in the same binary form. Specifically, if the predicted point falls inside the ground-truth bounding box, the error region is considered correctly localized and a reward of 1 is assigned; otherwise, the reward is set to 0.
As shown in Table~\ref{tab:symbolic_point_meta_verification}, rule-based symbolic point rewards also serve as an effective alternative to model-based textual explanations for meta-verification under the joint training setting.

\begin{table}[t]
\centering
\caption{
Performance Comparison of Rule-Based Symbolic Rewards and Model-Based Textual Rewards as Meta-Verification Signals on ViVerBench
}
\begin{threeparttable}
\resizebox{.7\linewidth}{!}{
\begin{tabular}{l|c|c}
\toprule
\textbf{Backbone} 
& \textbf{Reward Metric} 
& \textbf{ViVerBench} \\
\midrule

\textbf{OmniVerifier 7B} 
& -
& 0.6501 \\

\textbf{OmniVerifier 7B} 
& textual explanation (model-based)
& 0.6617 \\

\textbf{OmniVerifier 7B} 
& symbolic bbox (rule-based)
& 0.6613 \\

\textbf{OmniVerifier 7B} 
& symbolic point (rule-based)
& 0.6619 \\

\textbf{Qwen 3-VL 8B} 
& -
& 0.6539 \\

\textbf{Qwen 3-VL 8B} 
& textual explanation (model-based)
& 0.6698 \\

\textbf{Qwen 3-VL 8B} 
& symbolic bbox (rule-based)
& 0.6717 \\

\textbf{Qwen 3-VL 8B} 
& symbolic point (rule-based)
& 0.6709 \\

\bottomrule
\end{tabular}
}
\end{threeparttable}
\label{tab:symbolic_point_meta_verification}
\end{table}

\subsection{Evaluation of the Verifier's Localization Accuracy}
To directly evaluate the verifier’s ability to localize errors, we carefully construct a test set of 400 False samples that are not used during training, including 200 synthetic samples and 200 real-world samples. 
We compute the IoU between predicted and ground-truth bounding boxes and use a threshold of 0.6 consistent with the training setting to determine whether the error is successfully localized.

As shown in Table \ref{tab:error_localization_capability}, the significant improvement demonstrates that our decoupled symbolic rule-based RL effectively teaches the verifier to perform precise spatial error localization. Such high-precision localization capability ensures that the UMM agent receives reliable and fine-grained guidance

\begin{table}[t]
\centering
\caption{
    Evaluation of error localization capability on synthetic and real-world data.
}
\begin{threeparttable}
\resizebox{.7\linewidth}{!}{
\begin{tabular}{l|c|c}
\toprule
\textbf{Backbone} 
& \textbf{Synthetic Data} 
& \textbf{Real-World Data} \\
\midrule

\textbf{OmniVerifier 7B} 
& 0.290 
& 0.265 \\

\textbf{OmniVerifier 7B (Joint)} 
& 0.545
& 0.495 \\

\textbf{OmniVerifier 7B (Decoupled)} 
& 0.710
& 0.670 \\

\textbf{Qwen 3-VL 8B} 
& 0.375
& 0.325 \\

\textbf{Qwen 3-VL 8B (Joint)} 
& 0.665
& 0.605 \\

\textbf{Qwen 3-VL 8B (Decoupled)} 
& 0.780
& 0.725 \\

\bottomrule
\end{tabular}
}
\end{threeparttable}
\label{tab:error_localization_capability}
\end{table}

\subsection{Impact of Batch Size on Decoupled Training versus Joint Training}

\begin{table}[t]
\centering
\caption{
    Analysis of batch size on performance on ViVerBench and RefCOCO.
}
\begin{threeparttable}
\resizebox{.7\linewidth}{!}{
\begin{tabular}{l|c|c|c}
\toprule
\textbf{Backbone} 
& \textbf{Batch Size} 
& \textbf{ViVerBench} 
& \textbf{RefCOCO} \\
\midrule

\textbf{OmniVerifier 7B} 
& -
& 0.6501
& 0.7734 \\

\textbf{OmniVerifier 7B (Joint)} 
& 1B
& 0.6610
& 0.7800 \\

\textbf{OmniVerifier 7B (Decoupled)} 
& 1B
& 0.6672
& 0.7898 \\

\textbf{OmniVerifier 7B (Joint)} 
& 1.5B
& 0.6617
& 0.7813 \\

\textbf{OmniVerifier 7B (Decoupled)} 
& 1.5B
& 0.6680
& 0.7910 \\

\textbf{Qwen3-VL 8B} 
& -
& 0.6539
& 0.8313 \\

\textbf{Qwen3-VL 8B (Joint)} 
& 1B
& 0.6710
& 0.8470 \\

\textbf{Qwen3-VL 8B (Decoupled)} 
& 1B
& 0.6792
& 0.8642 \\

\textbf{Qwen3-VL 8B (Joint)} 
& 1.5B
& 0.6708
& 0.8473 \\

\textbf{Qwen3-VL 8B (Decoupled)} 
& 1.5B
& 0.6800
& 0.8660 \\

\bottomrule
\end{tabular}
}
\end{threeparttable}
\label{tab:batch_size_analysis}
\end{table}

To further verify whether the observed performance gains come from the training strategy rather than the batch size, we conduct additional experiments on both OmniVerifier and Qwen3-VL-8B. Specifically, (i) we increase the batch size of joint training to $1.5B$ and compare it with decoupled training under the same batch size; and (ii) we reduce the batch size of decoupled training to $B$ and compare it with joint training under the same setting.

As shown in Table~\ref{tab:batch_size_analysis}, decoupled training consistently outperforms joint training under the same batch size. This indicates that the performance gains do not simply come from using a larger batch size, but instead stem from the decoupled optimization strategy itself. 

In joint training, although the batch size is $B$, the same $0.5B$ negative samples are simultaneously used to optimize both the judgment objective and the grounding objective, meaning that each sample contributes to two different supervision signals. In contrast, in decoupled training, although the total batch size is $1.5B$, the same $0.5B$ negative samples are explicitly separated into objective-specific supervision: one part is used for the judgment objective, while the other is used for the grounding objective. Therefore, decoupled training changes how supervision signals are applied without increasing data diversity. 

Joint training suffers from sparse and entangled reward signals, whereas decoupled training reduces interference between optimization objectives and provides denser and more stable learning signals, leading to better performance.

\section{Data Construction Pipeline}
 we further provide a more detailed description of our two automated data construction pipelines.Our training data is constructed through two automated pipelines, ensuring that every false sample is associated with a meaningful and well-defined bounding box. \textbf{Importantly, our training data is entirely derived from OmniVerifier \citep{zhang2025generative}, enabling a fair comparison with the verifier and thereby allowing us to clearly demonstrate the advantages of meta-verification.} We construct the dataset using both synthetic data (ShareGPT-4o-Image) and real-world data (LVIS) through two automated methods to obtain both aligned and misaligned image–text pairs.
 
\paragraph{Method 1: Image-fixed, Prompt-modified.}
 For each complex image, we first use GPT-5 to generate a detailed prompt, which serves as the true prompt. We then modify the prompt using GPT-5 by adding or removing objects, altering attributes, or modifying spatial relationships to construct a mismatched (false) prompt, while GPT-5 simultaneously generates the corresponding ground-truth bounding boxes for the regions associated with these modifications.
 
 \paragraph{Method 2: Prompt-fixed, Image-inpainting.}
 We treat each complex image as the true image and first apply SAM 2.1 to segment it, obtaining masks and bounding boxes for all objects. To balance dataset difficulty, we dynamically select one object based on its mask area. We then perform inpainting using the selected mask to remove the object, thereby constructing a false image. Finally, we use GPT-5 to generate a detailed prompt from the true image, which serves as the fixed prompt. This construction naturally yields accurate and meaningful bounding boxes.

\section{Limitations and Future Works}
Despite the strong capabilities demonstrated by OmniVerifier-M1 in multimodal verifier training and M1-TTS in dynamic agentic generation, there remain two limitations that need to be addressed:

\begin{itemize}
    \item The verifier training paradigm proposed in this work requires validation on larger-scale backbone models and backbones with different architectures. Larger models tend to achieve higher binary judgment accuracy during early training, which may slightly mitigate the disadvantages of joint training but far from resolve them. Therefore, in future work, we plan to evaluate our approach on models with larger sizes, as well as on architectures such as MoE.

    \item The performance of M1-TTS is still strongly constrained by the editing capability of the underlying unified generative model. Our experiments show that although OmniVerifier-M1 can provide accurate bounding boxes and precise edit instructions, current image editing models are rarely trained to follow region-grounded editing commands. As a result, they may fail to restrict modifications to the specified bounding-box regions and instead introduce unnecessary or even harmful changes in unrelated areas. \textbf{This highlights an important direction for future research: developing fine-grained, region-level image editing models} that can faithfully execute localized instructions while preserving the rest of the image. Such grounding-aware editing capability is essential for enabling reliable dynamic image refinement and supporting more general, complex, and interactive generation scenarios.
\end{itemize}
\end{document}